\def\ps@pprintTitle{%
     \let\@oddhead\@empty
     \let\@evenhead\@empty
     \def\@oddfoot{\footnotesize\itshape
       \@title\hfill\today}
     \let\@evenfoot\@oddfoot}
\definecolor{mycyan}{rgb}{0,0.2,0.4}
\definecolor{mycyan2}{rgb}{0.08,0.58,0.89}
\definecolor{verylightgray}{rgb}{0.9,0.9,0.9}
\newcommand{\soscs}{sequences of sincere choices\xspace}
\newcommand*{\al}[1]{
\begin{tikzpicture}
[baseline=(letter.base)]
\node[draw,circle,inner sep=1pt] (letter) {$\mathrm #1$};
\end{tikzpicture}
}
\newcommand{\fixme}[1]{{\color{red}[FIXME] #1}}
\newcommand{\vect}[1]{\overrightarrow{#1}}
\newcommand{\Def}{\buildrel\hbox{\tiny \textit{def}}\over =}
\newcommand{\pol}{\textsf{P}\xspace}
\newcommand{\np}{\textsf{NP}\xspace}
\newcommand{\conp}{\textsf{coNP}\xspace}
\newtheorem{definition}{Definition}
\newtheorem{prop}{Proposition}
\newtheorem{example}{Example}
\newtheorem{observation}{Observation}
\newtheorem{corollary}{Corollary}
\DeclareMathOperator{\argmax}{argmax}
\DeclareMathOperator{\argmin}{argmin}
\DeclareMathOperator{\best}{best}
\newcommand{\llb}{\llbracket}                               
\newcommand{\rrb}{\rrbracket}                               
\newcommand{\hidden}[1]{\ifhmode \@bsphack \@esphack \fi}
\begin{document}

\renewcommand{\labelitemi}{$\bullet$}

\hypersetup{%
  citecolor=mycyan,
  linkcolor=mycyan,
  anchorcolor=mycyan,
  urlcolor=mycyan,
}

\begin{frontmatter}

\title{%
  Efficiency, Sequenceability and Deal-Optimality in Fair Division of Indivisible Goods\footnote{This article is an extension of a paper presented at COMSOC \citep{BL-COMSOC16}.}
}

\author[lip6]{Aurélie Beynier}
\author[lig]{Sylvain Bouveret}
\author[onera]{Michel Lemaître}
\author[lip6]{Nicolas Maudet}
\author[lip6,ens]{Simon Rey}

\address[lip6]{LIP6, Sorbonne Université}
\address[lig]{LIG, Univ. Grenoble-Alpes}
\address[onera]{Formerly ONERA Toulouse}
\address[ens]{ENS Paris Saclay}


\begin{abstract}
In fair division of indivisible goods, using sequences of sincere
  choices (or picking sequences) is a natural way to allocate the
  objects. The idea is as follows: at each stage, a designated
  agent picks one object among those that remain. 
  Another intuitive way to obtain an allocation is to give objects to agents in the first place, and to let agents exchange them as long as such ``deals'' are beneficial. 
This paper investigates these notions, when agents have additive preferences over objects, and unveils surprising connections between them, and with other efficiency and fairness notions. 
 In particular, we show that an allocation is sequenceable iff it is optimal for a certain type of deals, namely cycle deals involving a single object.  Furthermore, any Pareto-optimal allocation is sequenceable, but not the converse. 
Regarding fairness, we show that 
  an allocation can be envy-free and non-sequenceable, but that every
  competitive equilibrium with equal incomes is sequenceable. 
 To complete the picture, we show how some domain restrictions may affect the relations between these notions. 
   Finally, we experimentally explore the links between the scales of efficiency and fairness.
\end{abstract}

\begin{keyword}
  Multiagent Resource Allocation, Fair Division, Efficiency, Distributed Resource Allocation
\end{keyword}

\end{frontmatter}


\section{Introduction}

In this paper, we investigate fair division of indivisible goods. In
this problem, a set of indivisible objects or goods has to be
allocated to a set of agents, taking into
account the agents' preferences about the
objects. This classical collective decision making problem has plenty
of practical applications, among which the allocation of space
resources \citep{Lemaitre99,Bianchessi07}, of tasks to workers in
crowdsourcing market systems, papers to reviewers
\citep{goldsmith2007ai} or courses to students \citep{Budish11}.

This problem can be tackled from two different perspectives. The first
possibility is to resort to a benevolent entity in charge of
collecting in a \emph{centralized} way the preferences of all the
agents. This entity then computes an allocation that
takes into account these preferences and satisfies some fairness
(\textit{e.g.} envy-freeness) and efficiency (\textit{e.g.}
Pareto-optimality) criteria, or optimizes a well-chosen social welfare
ordering.  The second possibility is to have a \emph{distributed}
point of view, \textit{e.g.} by starting from an initial allocation
and letting the agents negotiate to swap their objects
\citep{Sandholm98,Chevaleyre05ijcai}. 
A somewhat intermediate approach consists in allocating the objects to
the agents using a \emph{protocol}, which allows to build an allocation interactively by asking the agents a sequence of questions. Protocols are at the heart of works mainly
concerning the allocation of divisible resources (cake-cutting) \citep{Brams96}, but
have also been studied in the context of indivisible goods
\citep{Brams96,brams2012undercut}. 

In this paper, we focus on a particular allocation protocol:
\emph{sequences of sincere choices} (also known as \emph{picking
  sequences}). This very simple protocol works as follows.
A central authority chooses a sequence of agents before the protocol
starts, having as many agents as the number of objects (some agents
may appear several times in the sequence). Then, each agent appearing
in the sequence is asked to choose in turn one object among those that
remain. For instance, according to the sequence
$\langle 1,2,2,1 \rangle$, agent $1$ is going to choose first, then
agent $2$ will pick two consecutive objects, and agent $1$ will take
the last object. This protocol, actually used in a lot of
everyday situations, 
has been studied for the first time by \cite{KC71}. Later,
\cite{Brams00} have studied a particular version of this protocol,
namely alternating sequences, in which the sequence of agents is
restricted to a balanced ($\langle 1,2,2,1...\rangle$) or strict
($\langle 1,2,1,2...\rangle$) alternation of agents. \cite{BL-IJCAI11}
have further formalized this protocol, whose properties (especially
related to game theoretic aspects) have been characterized
\citep{KalinowskiNW13,KalinowskiNWX13}. Finally,
\cite{Aziz2015possible} have studied the complexity of problems
related to finding whether a particular assignment (or bundle) is
achievable by a particular class of picking sequences.  In their work
(not specifically dedicated to picking sequences) focusing on a
situation where the agents have ordinal preferences,
\cite{BramsKing05} make an interesting link between this protocol and
Pareto-optimality, showing, among others, that picking sequences
always result in a Pareto-optimal allocation, but also that every
Pareto-optimal allocation can be obtained in this way.

In this paper, we elaborate on these ideas and analyze the links
between sequences, certain types of deals among agents, and some
efficiency and fairness properties, in a more general model in which
the agents have numerical additive preferences on the objects. Our
main contributions are the following.  We give a formalization of the
link between allocations and
\soscs, 
highlighting a simple characterization of the sequenceability of an
allocation (Section \ref{sec:seqAlloc}). Then, we show that in this
slightly more general framework than the one by Brams \emph{et al.},
surprisingly, Pareto-optimality and sequenceability are not equivalent
anymore (Section \ref{sec:Pareto}). By unveiling the connection
between sequenceability and cycle deals among agents (Section
\ref{sec:deals}), we obtain a rich ``scale of efficiency'' that allows
us to characterize the degree of efficiency of a given
allocation. Interestingly, some domain restrictions have significant
effects on this hierarchy (Section \ref{sec:restricted-domains}).  We
also highlight a link between sequenceability and another important
economical concept: the competitive equilibrium from equal income
(CEEI). Another contribution is the experimental exploration of the
links between the scale of efficiency and fairness properties.


\section{Model and Definitions}
\label{sec:modele}

The aim of the fair division of indivisible goods, also called
MultiAgent Resource Allocation (MARA), is to allocate a finite set of
\emph{objects} $\mathcal{O} = \{1,\dots,m\}$ to a finite set of
\emph{agents} $\mathcal{N} = \{1,\dots,n\}$. A \emph{sub-allocation}
on $\mathcal{O}' \subseteq \mathcal{O}$ is a vector
${\vect{\pi}}^{|\mathcal{O}'} = \langle
\pi_1^{|\mathcal{O}'},\dots,\pi_{n}^{|\mathcal{O}'}\rangle$ of
\emph{bundles} of objects, such that $\forall i, \forall j$ with
$i \neq j : \pi_i^{|\mathcal{O}'}\cap\pi_j^{|\mathcal{O}'}=\emptyset$
(a given object cannot be allocated to more than one agent) and
$\bigcup_{i\in\mathcal{N}} \pi_i^{|\mathcal{O}'} = \mathcal{O}'$ (all
the objects from $\mathcal{O}'$ are allocated).
$\pi_i^{|\mathcal{O}'} \subseteq \mathcal{O}'$ is called agent $i$'s
\emph{share} on $\mathcal{O}'$.  ${\vect{\pi}}^{|\mathcal{O}''}$ is a
sub-allocation of ${\vect{\pi}}^{|\mathcal{O}'}$ when
$\pi^{|\mathcal{O}''}_i \subseteq \pi^{|\mathcal{O}'}_i$ for each
agent $i$. Any sub-allocation ${\vect{\pi}}^{|\mathcal{O}} $ on the
entire set of objects will be denoted ${\vect{\pi}} $ and just called
\emph{allocation}.

Any satisfactory allocation must take into account the agents'
preferences on the objects. Here, we will make the classical
assumption that these preferences are \emph{numerically additive}.
Each agent $i$ has a \emph{utility function} $u_i:2^\mathcal{O} \to \mathbb{R}^+$
measuring her satisfaction $u_i(\pi)$ when she obtains share $\pi$,
which is defined as follows:
\begin{equation*} \label{eq:utilite} u_i(\pi) \Def
  \sum_{k\in\pi}w(i,k),
\end{equation*}
where $w(i, k)$ is the weight given by agent $i$ to object
$k$. This assumption, as restrictive as it may seem, is made by a
lot of authors \cite[for instance]{Lipton04,Bansal06} and is
considered as a good compromise between expressivity and conciseness.

\hidden{If we put things together:}
\begin{definition}
  An instance of the \emph{additive multiagent resource allocation
    problem} (add-MARA instance for short) $I = \langle \mathcal{N}, \mathcal{O}, w \rangle$ is a tuple
  \hidden{$(\mathcal{N}, \mathcal{O}, w)$,}  with $\mathcal{N}$ and $\mathcal{O}$ as defined above and \hidden{ where:
  \begin{itemize}
  \item $\mathcal{N} = \{1, \dots ,i, \dots, n\}$ is a set of
    $n$ \emph{agents};
  \item $\mathcal{O} = \{1, \dots ,k, \dots m\}$ is a set of
    $m$ \emph{objects},
  \item} $w: \mathcal{N} \times \mathcal{O} \to \mathbb{R}^+$ is a mapping with $w(i, k)$ being the weight given by agent $i$ to
    object $k$.
  \hidden{\end{itemize}}

We say that the agents' preferences \emph{are strict on objects} if,
for each agent $i$ and each pair of objects $k \neq l$, we have
$w(i, k) \neq w(i, l)$. Similarly, we say that the agents'
preferences are \emph{strict on shares} if, for each agent $i$ and
each pair of shares $\pi\neq\pi'$, we have $u_i(\pi) \neq
u_i(\pi')$. Note that preferences strict on shares entails preferences strict on
objects; the converse is false.

  We will denote by $\mathcal{P}(I)$ the set of allocations for $I$.
\end{definition}

\hidden{
\begin{observation} \label{rem:str} Strict order on shares entails strict
  order on objects; the converse is false.
\end{observation}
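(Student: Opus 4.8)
The plan is to establish the two halves of the statement independently: the implication by a direct instantiation of the hypothesis, and the failure of the converse by a small explicit counterexample.

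For the forward direction, the key observation I would use is that singletons are themselves shares, and that additivity gives $u_i(\{k\}) = w(i,k)$ for every object $k$. Fixing an arbitrary agent $i$ and two distinct objects $k \neq l$, the shares $\{k\}$ and $\{l\}$ are distinct; hence strictness on shares forces $u_i(\{k\}) \neq u_i(\{l\})$, which is precisely $w(i,k) \neq w(i,l)$. Since $i$, $k$ and $l$ were arbitrary, this is exactly strictness on objects. The only point needing a line of care is that $\{k\}$ and $\{l\}$ are genuinely distinct shares whenever $k \neq l$, which is immediate; one could equivalently phrase the whole argument by contraposition, deriving a pair of equal-utility distinct singletons from any violation of strictness on objects.

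For the converse, I would exhibit an instance that is strict on objects but not on shares, which suffices to show the implication cannot be reversed. It is enough to take a single agent together with at least three objects whose weights are pairwise distinct yet admit two subsets with the same sum; the minimal choice is weights $1$, $2$ and $3$, so that the distinct shares $\{1,2\}$ and $\{3\}$ both have utility $3$. Here the object weights are visibly all distinct, so preferences are strict on objects, while the tie between these two bundles shows they are not strict on shares.

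I expect no real obstacle, as both directions are elementary; the statement is essentially a sanity check on the two strictness notions. The only items worth verifying explicitly are the status of singletons as valid shares in the implication and the pairwise distinctness of the weights in the counterexample. The asymmetry is exactly the anticipated one: requiring distinctness at the level of subset sums is strictly stronger than requiring it at the level of individual objects, since additive aggregation can manufacture ties between bundles even from all-distinct object weights.
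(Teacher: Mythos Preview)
Your proposal is correct. The paper does not actually prove this observation; it is asserted inline as a remark immediately after the definitions (``Note that preferences strict on shares entails preferences strict on objects; the converse is false''), with no argument given. Your singleton argument for the implication and the $\{1,2,3\}$ counterexample for the failure of the converse are exactly the natural way to fill in the details the paper omits.
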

}

The following definition will play a prominent role.

\begin{definition}
  Given an agent $i$ and a set of objects $O$, let
  $\best(O, i) = \argmax_{k\in O} w(i, k)$ be the subset 
  of objects in $O$ having the highest weight for agent $i$ (such
  objects will be called \emph{top} objects of $i$). 
  
  A (sub-)allocation ${\vect{\pi}}^{|\mathcal{O}'}$ is said \emph{frustrating} if
  no agent receives one of her top objects in ${\vect{\pi}}^{|\mathcal{O}'}$
  (formally:
  $\best(\mathcal{O}', i) \cap {\vect{\pi}}^{|\mathcal{O}'}_i = \emptyset$ for
  each agent $i$), and \emph{non-frustrating} otherwise.
\end{definition}

\hidden{It should be emphasized that this notion of frustrating
  allocation was already present but implicit in the work by Brams and
  King \cite{BramsKing05}. Here, we bring this concept out because it
  will lead to a nice characterization of sequenceable allocations, as
  we will see later.  } \medskip

In the following, we will consider a particular way of allocating
objects to agents: allocation by sequences of sincere
choices. Informally the agents are asked in turn, according to a
predefined sequence, to choose and pick a top object among the
remaining ones.

\begin{definition} Let $I = \langle \mathcal{N}, \mathcal{O}, w \rangle$ be an add-MARA instance. A
  \emph{sequence of sincere choices} (or simply \emph{sequence} when
  the context is clear) is a vector of $\mathcal{N}^m$.
We will denote by $\mathcal{S}(I)$ the set of possible sequences for the instance $I$.
\end{definition}

\hidden{Let $\vect{\sigma} \in \mathcal{S}(I)$. $\vect{\sigma}$ is said to \emph{generate} allocation
${\vect{\pi}}$ iff ${\vect{\pi}}$ can be obtained as a possible
result of a non-deterministic 
algorithm on input $I$ and $\vect{\sigma}$ which simply makes agents chose one of their top object at their turn in the sequence.\footnote{The non-determinism comes from the fact that at her turn, an agent might be indifferent between several top objects.}}

Let $\vect{\sigma} \in \mathcal{S}(I)$. $\vect{\sigma}$ is said to
\emph{generate} allocation $\vect{\pi}$ if and only if $\vect{\pi}$
can be obtained as a possible result of the
non-deterministic\footnote{The algorithm contains an instruction
  $\Choose$ splitting the control flow into several branches, building
  all the allocations generated by $\vect{\sigma}$.}
Algorithm~\ref{algo:ss} on input $I$ and $\vect{\sigma}$.

\begin{algorithm}
  \caption{Execution of a sequence}
  \label{algo:ss}
  \Input{an instance $I = \langle \mathcal{N}, \mathcal{O}, w \rangle$ and a sequence $\vect{\sigma} \in \mathcal{S}(I)$}
  \Output{an allocation ${\vect{\pi}} \in \mathcal{P}(I)$}

  ${\vect{\pi}} \gets$ empty allocation (such that $\forall i \in \mathcal{N} : \pi_i = \emptyset$)\;
  $\mathcal{O}_1 \gets \mathcal{O}$\;
  \For{$t$ \From 1 \To $m$}{
    $i \gets \sigma_t$\;
    \Choose object $o_t \in \best(\mathcal{O}_t, i)$\label{line:guess} \;
    $\pi_i \gets \pi_i  \cup \{ o_t \} $ \;
    $\mathcal{O}_{t+1} \gets \mathcal{O}_t \setminus  \{ o_t \} $
  }
\end{algorithm}

\begin{definition} \label{def:sequenceable} An allocation ${\vect{\pi}}$ is
  said to be \emph{sequenceable} if there exists a sequence $\vect{\sigma}$
  that generates ${\vect{\pi}}$, and \emph{non-sequenceable} otherwise. For
  a given instance I, we will denote by $s(I)$ the binary relation
  defined by $(\vect{\sigma},{\vect{\pi}})\in s(I)$ if and only if ${\vect{\pi}}$ can be
  generated by $\vect{\sigma}$.
\end{definition}

\begin{example} \label{ex:A1} Let $I$ be the instance represented by
  the following weight matrix:\footnote{In this example and the
    following ones, we represent instances by a matrix where
    the value at row $i$ and column $k$ represents the weight
    $w(i, k)$. We also use $ab...$ as a shorthand
    for $\{a, b, ...\}$.}
  \[  \left(
    \begin{array}{rrr}
      8 & 2 & 1 \\
      5 & 1 & 5 
    \end{array}
  \right) \]
   The binary relation $s(I)$ between $\mathcal{S}(I)$ and $\mathcal{P}(I)$ can be graphically
   represented as follows:

   \vspace{-0.2cm}  
   \begin{center}
     \begin{tikzpicture}
       \draw (-1.5, 0) node[circle, inner sep=2pt, outer sep=5pt] (node-S1) {} node[above, outer sep=5pt] {$\mathcal{S}(I)$\quad$\to$};

       \draw (0, 0) node[circle, fill=black, inner sep=2pt, outer sep=5pt] (node-a) {} node[above, outer sep=5pt] {$\langle 1, 1, 1\rangle$};
       \draw (1.5, 0) node[circle, fill=black, inner sep=2pt, outer sep=5pt] (node-b) {} node[above, outer sep=5pt] {$\langle 1, 1, 2\rangle$};
       \draw (3, 0) node[circle, fill=black, inner sep=2pt, outer sep=5pt] (node-c) {} node[above, outer sep=5pt] {$\langle 1, 2, 1\rangle$};
       \draw (4.5, 0) node[circle, fill=black, inner sep=2pt, outer sep=5pt] (node-d) {} node[above, outer sep=5pt] {$\langle 1, 2, 2\rangle $};
       \draw (6, 0) node[circle, fill=black, inner sep=2pt, outer sep=5pt] (node-e) {} node[above, outer sep=5pt] {$\langle 2, 1, 1\rangle$};
       \draw (7.5, 0) node[circle, fill=black, inner sep=2pt, outer sep=5pt] (node-f) {} node[above, outer sep=5pt] {$\langle 2, 1, 2\rangle$};
       \draw (9, 0) node[circle, fill=black, inner sep=2pt, outer sep=5pt] (node-g) {} node[above, outer sep=5pt] {$\langle 2, 2, 1\rangle$};
       \draw (10.5, 0) node[circle, fill=black, inner sep=2pt, outer sep=5pt] (node-h) {} node[above, outer sep=5pt] {$\langle 2, 2, 2\rangle$};

       \draw (-1.5, -0.7) node[circle, inner sep=2pt, outer sep=5pt] (node-P1) {} node[below, outer sep=5pt] {$\mathcal{P}(I)$\quad$\to$};

       \draw (0, -0.7) node[circle, fill=black, inner sep=2pt, outer sep=5pt] (node-A) {} node[below, outer sep=5pt] {$\langle  123, \emptyset \rangle$};
       \draw (1.5, -0.7) node[circle, fill=black, inner sep=2pt, outer sep=5pt] (node-B) {} node[below, outer sep=5pt] {$\langle  12 , 3 \rangle$};
       \draw (3, -0.7) node[circle, fill=black, inner sep=2pt, outer sep=5pt] (node-C) {} node[below, outer sep=5pt] {$\langle  13, 2 \rangle$};
       \draw (4.5, -0.7) node[circle, fill=black, inner sep=2pt, outer sep=5pt] (node-D) {} node[below, outer sep=5pt] {$\langle  1 , 23 \rangle$};
       \draw (6, -0.7) node[circle, fill=black, inner sep=2pt, outer sep=5pt] (node-E) {} node[below, outer sep=5pt] {$\langle  23 , 1 \rangle$};
       \draw (7.5, -0.7) node[circle, fill=black, inner sep=2pt, outer sep=5pt] (node-F) {} node[below, outer sep=5pt] {$\langle 2 ,13 \rangle$};
       \draw (9, -0.7) node[circle, fill=black, inner sep=2pt, outer sep=5pt] (node-G) {} node[below, outer sep=5pt] {$\langle  3 , 12 \rangle$};
       \draw (10.5, -0.7) node[circle, fill=black, inner sep=2pt, outer sep=5pt] (node-H) {} node[below, outer sep=5pt] {$\langle  \emptyset , 123 \rangle$};

       \foreach \x/\y in {a/A, b/B, c/B, d/D, e/B,  e/E, f/D, f/F, g/F, h/H}
      \draw (node-\x.center) -- (node-\y.center);
     \end{tikzpicture}
   \end{center}

  For instance, sequence $\langle 2, 1, 2 \rangle$ generates two
  possible allocations: $\langle 1, 23 \rangle$ and
  $\langle 2, 13 \rangle$, depending on whether agent 2 chooses object
  1 or 3 that she both prefers. Allocation $\langle 12, 3 \rangle$ can
  be generated by three sequences. Allocations $\langle 13, 2 \rangle$
  and $\langle 3, 12 \rangle$ are non-sequenceable.
\end{example}

\hidden{
\begin{observation} \label{rem:card} For any instance $I$,
  $|\mathcal{S}(I)| = |\mathcal{P}(I)| = n^m$.
\end{observation}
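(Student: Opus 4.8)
The plan is to compute the two cardinalities separately and observe that they coincide. For $|\mathcal{S}(I)|$, I would simply invoke the definition: a sequence is by definition an element of $\mathcal{N}^m$, i.e.\ a vector of length $m$ whose entries range over the $n$ agents. Each of the $m$ positions can be filled independently by any of the $n$ agents, so by the product rule $|\mathcal{S}(I)| = |\mathcal{N}|^m = n^m$. This part is immediate and needs no argument beyond elementary counting.

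For $|\mathcal{P}(I)|$, the key step is to exhibit a bijection between allocations and functions $f : \mathcal{O} \to \mathcal{N}$. Given an allocation $\vect{\pi} = \langle \pi_1, \dots, \pi_n \rangle$, the disjointness condition $\pi_i \cap \pi_j = \emptyset$ for $i \neq j$, together with the covering condition $\bigcup_{i \in \mathcal{N}} \pi_i = \mathcal{O}$, guarantees that every object $k \in \mathcal{O}$ belongs to exactly one share; I would therefore set $f(k)$ to be that unique agent. Conversely, from any $f$ one recovers an allocation by putting $\pi_i = f^{-1}(\{i\})$, and a routine check shows these two maps are mutually inverse. Hence allocations are in one-to-one correspondence with the functions from an $m$-element set to an $n$-element set, of which there are exactly $n^m$.

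Combining the two counts yields $|\mathcal{S}(I)| = |\mathcal{P}(I)| = n^m$. I do not anticipate any genuine difficulty here: the only point requiring a moment of care is verifying that the disjointness-plus-covering constraints are \emph{exactly} equivalent to ``each object is assigned to precisely one agent,'' so that the correspondence with functions is a true bijection and not merely an injection or a surjection. Once that equivalence is noted, both equalities follow at once.
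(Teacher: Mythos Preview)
Your proposal is correct. The paper does not actually provide a proof of this observation: it is stated as a one-line remark with no argument, presumably because both counts are considered immediate from the definitions. Your write-up makes explicit exactly the reasoning one would expect (the bijection between allocations and functions $\mathcal{O}\to\mathcal{N}$ is the only point with any content), so there is nothing to compare against and nothing missing.
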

}
For any instance $I$,
  $|\mathcal{S}(I)| = |\mathcal{P}(I)| = n^m$.
Note also that the number of objects allocated to an agent by a sequence is the number of times the agent appears in the sequence. 
  \hidden{
  Formally: for all $(\vect{\sigma},{\vect{\pi}})\in s(I)$ and all
  agents $i$,
  $|\pi_i| = \sum_{k\in\mathcal{O}}[\sigma_k = i]$ where
  $[z = t]$ is 1 if the equality is verified, and 0 otherwise.
  }

  \hidden{Once again, this notion of sequenceability is already
    implicitly} The notion of frustrating allocation and
  sequenceability were already implicitly present in the work by
  \cite{BramsKing05}, and sequenceability has been extensively studied
  by \cite{Aziz2015possible} with a focus on sub-classes of sequences
  (\textit{e.g.}  alternating sequences). However, a fundamental
  difference is that in our setting, the preferences might be non
  strict on objects, which entails that the same sequence can yield
  different allocations (in the worst case, an exponential number), as
  Example~\ref{ex:A1} shows.



\section{Sequenceable allocations}
\label{sec:seqAlloc}

We have seen in Example~\ref{ex:A1} that some allocations are
non-sequenceable. We will now formalize this and give a precise
characterization of sequenceable allocations that is, we will try to
identify under which conditions an allocation is achievable by the
execution of a sequence of sincere choices. We first start by noticing
that in every sequenceable allocation, the first agent of the sequence
gets a top object, so every frustrating allocation is
non-sequenceable.  \hidden{ In this section and in the following one,
  we will give a characterization of sequenceable allocations, that
  is, we will try to identify under which conditions an allocation is
  achievable by the execution of a sequence of sincere choices. The
  question has already been extensively studied by
  \cite{Aziz2015possible}, but in a quite different context ---
  namely, ordinal strict preferences on objects --- and with a
  particular focus on sub-classes of sequences (\textit{e.g.}
  alternating sequences). As we will show, the properties are not
  completely similar in our context.

\subsection{Characterization}
 
We have seen in Example~\ref{ex:A1} that some allocations are
non-sequenceable. We will now formalize this and give a precise
characterization of sequenceable allocations. We first start by
noticing that in every sequenceable allocation, the first agent of the
sequence gets a top object, so every frustrating allocation is
  non-sequenceable. which yields the following remark:

\begin{observation} \label{rem:nonSeq} Every frustrating allocation is
  non-sequenceable.
\end{observation}
 }
\hidden{ 
\begin{example} \label{ex:B} In the following instance,
  the circled allocation $\langle 23, 1 \rangle$ is
  non-sequenceable because it is frustrating.

  \[  \left(
    \begin{array}{ccc}
      2 & \al 1  & \al 1\\
      \al 1 & 2 & 2
    \end{array}
  \right) \]
\end{example}
}
It is even possible to find a non-sequenceable allocation that
gives her top object to one agent (as allocation
$\langle 13, 2 \rangle$ in Example~\ref{ex:A1}) or even to all: 

\begin{example} \label{ex:C} Consider this instance: 

  \[  \left(
    \begin{array}{cccc}
      \al 9 & 8 & \tikz[remember picture]\node[inner sep=0pt] (ul1) {};2 & \al 1 \\
      2 & \al 5 & \al 1 & 4\tikz[remember picture]\node[inner sep=0pt] (br1) {}; 
    \end{array}
  \right) \]

  \tikz[remember picture]\draw[overlay,dotted] ($ (ul1) + (-0.2, 0.4) $) rectangle ($ (br1) + (0.2, -0.15) $);

  In the circled allocation
  ${\vect{\pi}}=\langle 14, 23 \rangle$, every agent receives her
  top object. However, after objects 1 and 2 have been allocated (they
  must be allocated first by all sequences generating ${\vect{\pi}}$), the
  dotted sub-allocation remains.  This
  sub-allocation is obviously non-sequenceable because it is
  frustrating. Hence ${\vect{\pi}}$ is not sequenceable either.
\end{example}

This property of containing a frustrating sub-allocation exactly
characterizes the set of non-sequenceable allocations:
 
\begin{prop}\label{prop:sousAlloc}
  Let $I = \langle \mathcal{N}, \mathcal{O}, w \rangle$ be an instance and ${\vect{\pi}}$ be an allocation of
  this instance. The two following statements are equivalent:
  \begin{itemize}
  \item[(A)] ${\vect{\pi}}$ is sequenceable.
  \item[(B)] No sub-allocation of ${\vect{\pi}}$ is frustrating (in every
    sub-allocation, at least one agent receives a top object).
  \end{itemize}
\end{prop}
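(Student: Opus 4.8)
The plan is to prove the two implications separately, with the forward direction (A) $\Rightarrow$ (B) being the easier one and the converse (B) $\Rightarrow$ (A) carrying the real content.

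For (A) $\Rightarrow$ (B), I would argue by contraposition. Suppose some sub-allocation ${\vect{\pi}}^{|\mathcal{O}'}$ of ${\vect{\pi}}$ is frustrating. The key observation, already noted in the excerpt, is that the first agent in any sequence necessarily picks one of her top objects among \emph{all} the objects, so a frustrating allocation can never be sequenceable. The subtlety is to lift this from the sub-allocation to ${\vect{\pi}}$ itself. I would take any sequence $\vect{\sigma}$ allegedly generating ${\vect{\pi}}$ and consider the first time step $t$ at which the chosen object $o_t$ lies in $\mathcal{O}'$. At that moment every object of $\mathcal{O}'$ is still available (none has been removed yet, since $o_t$ is the first one touched in $\mathcal{O}'$), so the agent $\sigma_t$ picking $o_t$ is choosing a top object from a set containing all of $\mathcal{O}'$; in particular $o_t$ is a top object of $\sigma_t$ within $\mathcal{O}'$. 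But $o_t \in {\vect{\pi}}^{|\mathcal{O}'}_{\sigma_t}$, contradicting the assumption that the sub-allocation is frustrating. Hence no such $\vect{\sigma}$ exists and ${\vect{\pi}}$ is non-sequenceable.

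For the converse (B) $\Rightarrow$ (A), I would construct a sequence generating ${\vect{\pi}}$ by a greedy peeling argument. Assume no sub-allocation of ${\vect{\pi}}$ is frustrating. I build $\vect{\sigma}$ backwards in time on the restriction of ${\vect{\pi}}$ to shrinking object sets. Start with $\mathcal{O}' = \mathcal{O}$: by hypothesis the sub-allocation ${\vect{\pi}}^{|\mathcal{O}'}$ (which is ${\vect{\pi}}$ itself) is non-frustrating, so some agent $i$ receives one of her top objects $k \in \best(\mathcal{O}', i) \cap {\vect{\pi}}_i$. I then let agent $i$ pick $k$ first, remove $k$ from $\mathcal{O}'$, and recurse on $\mathcal{O}' \setminus \{k\}$ with the induced sub-allocation. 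The crucial point is that the restriction of a non-frustrating-everywhere allocation is still non-frustrating-everywhere: every sub-allocation of ${\vect{\pi}}^{|\mathcal{O}' \setminus \{k\}}$ is also a sub-allocation of ${\vect{\pi}}$, so hypothesis (B) is inherited by the smaller instance. This guarantees the recursion never gets stuck, and after $m$ steps we have emptied all shares, yielding a sequence $\vect{\sigma}$ together with a valid run of Algorithm~\ref{algo:ss} that reproduces ${\vect{\pi}}$.

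The main obstacle, and the step I would spend the most care on, is verifying that the greedy choice is consistent with \emph{sincere} picking at every stage, that is, that each agent is genuinely taking a top object among those \emph{currently remaining} rather than merely among the objects of the current sub-allocation. This requires checking that the objects removed in earlier (later in sequence) steps never include objects more preferred by $i$ than $k$ in a way that would have forced $i$ to deviate --- but since we peel from the full set downward and the top object $k$ is top within the current $\mathcal{O}'$, and the available set at time $t$ in the forward run is exactly $\mathcal{O}'$, sincerity holds by construction. I would make this precise by an induction on $|\mathcal{O}'|$, formally stating the invariant that at each recursive step the remaining object set coincides with the domain of the current sub-allocation, so that $\best$ computed on the sub-allocation's domain is the same as $\best$ on the genuinely available objects.
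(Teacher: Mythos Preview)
Your proof is correct. The (B) $\Rightarrow$ (A) direction matches the paper's greedy construction essentially verbatim (the paper is terser and does not spell out the inheritance of hypothesis (B) under restriction, which you rightly make explicit; your phrase ``backwards in time'' is slightly misleading since the construction is in fact forward, but the description that follows is unambiguous).

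For (A) $\Rightarrow$ (B), however, you take a different and more direct route than the paper. You look at the \emph{first} step at which an object of $\mathcal{O}'$ is picked and observe that, since all of $\mathcal{O}'$ is still available at that moment, this object must be a top object of its recipient within $\mathcal{O}'$, giving an immediate contradiction with frustration. The paper instead argues by an infinite-descent style chain: starting from any object $k \in \mathcal{O}'$, it finds a strictly preferred object $l \in \mathcal{O}'$ that must have been allocated earlier, then repeats from $l$, eventually producing a cycle in the temporal precedence relation, which is a contradiction. Your argument is shorter and more elementary for this proposition in isolation; the paper's cycle-chasing argument, while longer here, is what gets reused almost verbatim to prove Proposition~\ref{prop:POseq} (Pareto-optimal $\Rightarrow$ sequenceable) and Proposition~\ref{pro:cycle-sequence} (sequenceable $\Leftrightarrow$ $>$-$n$-cycle optimal), where the cycle itself is the object of interest.
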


\begin{proof} (B) $\Rightarrow$ (A). Let us suppose that for all subsets of
  objects $\mathcal{O}' \subseteq \mathcal{O}$ there is at least one agent
  obtaining one of her top objects in ${\vect{\pi}}^{|\mathcal{O}'}$. We will
  show that ${\vect{\pi}}$ is sequenceable. Let $\vect{\sigma}$ be a
  sequence of agents and $\vect{\mathcal{O}} \in (2^\mathcal{O})^m$
  be a sequence of sets of objects jointly defined as follows:
  \begin{itemize}
  \item $\mathcal{O}_1 = \mathcal{O}$ and $\sigma_1$ is an agent that
    receives one of her top objects in
    ${\vect{\pi}}^{|\mathcal{O}_1}$;
  \item $\mathcal{O}_{t+1} = \mathcal{O}_{t} \setminus \{o_{t}\}$,
    where $o_{t} \in \best(\mathcal{O}_{t}, \sigma_{t})$ and
    $\sigma_t$ is an agent that receives one of her top objects in
    ${\vect{\pi}}^{|\mathcal{O}_t}$, for $t \geq 1$.
  \end{itemize}
  
  From the assumption on ${\vect{\pi}}$, we can check that the sequence
  $\vect{\sigma}$ is perfectly defined. Moreover, ${\vect{\pi}}$ is one of the
  allocations generated by $\vect{\sigma}$.
  
  (A) $\Rightarrow$ (B) by contraposition. Let ${\vect{\pi}}$ be an allocation
  containing a frustrating sub-allocation
  ${\vect{\pi}}^{|\mathcal{O}'}$. Suppose that there exists a sequence $\vect{\sigma}$
  generating ${\vect{\pi}}$.  We can notice that in
  a sequence of sincere choices\hidden{ Algorithm~\ref{algo:ss}}, when an object is allocated to an agent,
  all the objects that are strictly better for her have already been
  allocated at a previous step. Let $k\in\mathcal{O}'$, and let $i$ be
  the agent that receives $k$ in ${\vect{\pi}}$. Since
  ${\vect{\pi}}^{|\mathcal{O}'}$ is frustrating, there is another object
  $l \in \mathcal{O}'$ such that $w(i, l) > w(i, k)$. From the
  previous remark, $l$ is necessarily allocated before $k$ in the
  execution of $\vect{\sigma}$. We can deduce, from the same line of
  reasoning on $l$ and agent $j$ that receives it, that there is
  another object $p$ allocated before $l$ in the execution of
  the sequence. The set $\mathcal{O}'$ being finite, using the same
  argument iteratively, we will necessarily find an object which has
  already been encountered before. This leads to a cycle in the
  precedence relation of the objects in the execution of the
  sequence. Contradiction: no sequence can thus generate ${\vect{\pi}}$.
  \end{proof}

  Beyond the fact that it characterizes a sequenceable allocation, the proof of
  Proposition~\ref{prop:sousAlloc} gives
  a practical way of checking if an allocation is sequenceable, and,
  if it is the case, of computing a sequence that generates this
  allocation. 

  \begin{prop}\label{prop:complexity}
    Let $I = \langle \mathcal{N}, \mathcal{O}, w \rangle$ be an instance and ${\vect{\pi}}$ be an allocation
    of this instance. We can decide in time
    $O(n \times m^2)$ if ${\vect{\pi}}$ is sequenceable.
  \end{prop}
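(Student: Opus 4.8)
The plan is to turn the constructive part of the proof of Proposition~\ref{prop:sousAlloc} into a concrete algorithm and to bound its running time. The idea is to peel off objects one by one, in the order in which a generating sequence would pick them. Concretely, I would maintain a set of remaining objects $\mathcal{O}_t$, starting from $\mathcal{O}_1 = \mathcal{O}$, and at each step search for an agent $i$ that receives one of her top objects of $\mathcal{O}_t$ in the restricted allocation $\vect{\pi}^{|\mathcal{O}_t}$, that is, an agent $i$ with $\best(\mathcal{O}_t, i) \cap \pi_i \neq \emptyset$. If such an agent exists, I remove from $\mathcal{O}_t$ one object $o_t \in \best(\mathcal{O}_t, i) \cap \pi_i$, append $i$ to the sequence under construction, and iterate on $\mathcal{O}_{t+1} = \mathcal{O}_t \setminus \{o_t\}$. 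If no such agent exists, I stop and report that $\vect{\pi}$ is non-sequenceable.

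First I would establish correctness, which follows almost directly from Proposition~\ref{prop:sousAlloc}. If the peeling completes, it has exhibited a sequence $\vect{\sigma}$ together with a run of Algorithm~\ref{algo:ss} producing $\vect{\pi}$, so $\vect{\pi}$ is sequenceable. Conversely, if the process gets stuck at some $\mathcal{O}_t$, then by definition $\vect{\pi}^{|\mathcal{O}_t}$ is a frustrating sub-allocation of $\vect{\pi}$, and Proposition~\ref{prop:sousAlloc} immediately gives that $\vect{\pi}$ is non-sequenceable. The arbitrary choices made during peeling are harmless: every set $\mathcal{O}_t$ reached is a subset of $\mathcal{O}$, so whenever $\vect{\pi}$ is sequenceable, Proposition~\ref{prop:sousAlloc} guarantees that $\vect{\pi}^{|\mathcal{O}_t}$ is non-frustrating and the process can always continue, whatever objects were removed beforehand.

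Then I would bound the complexity. The loop runs at most $m$ times, since one object is removed at each successful step (plus at most one final step that detects failure). Within a single step, for each of the $n$ agents I scan the remaining objects once to compute $\max_{k \in \mathcal{O}_t} w(i, k)$ and to test whether some object of $\pi_i$ still present in $\mathcal{O}_t$ attains this maximum; this costs $O(|\mathcal{O}_t|) = O(m)$ per agent, hence $O(n \times m)$ for the whole step. Multiplying by the $m$ iterations yields the announced $O(n \times m^2)$ bound.

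The only mildly delicate point is the correctness of the greedy peeling, namely the fact that local choices never lead the process into a dead end when a generating sequence exists; but, as noted above, this is exactly what the characterization of Proposition~\ref{prop:sousAlloc} provides, since it constrains \emph{all} sub-allocations simultaneously rather than only those along one particular peeling order. The remainder is a routine implementation and counting argument, and I do not expect it to require more than the per-step scan described above (auxiliary data structures could shave constants, but are unnecessary to reach the stated bound).
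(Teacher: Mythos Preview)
Your proposal is correct and follows essentially the same approach as the paper: the paper presents exactly this greedy peeling procedure (its Algorithm~\ref{algo:testSeq}), proves correctness via Proposition~\ref{prop:sousAlloc} in the same two directions you give, and bounds the running time by the same $m \times O(n \times m)$ counting argument. Your explicit remark that arbitrary choices during peeling cannot cause a dead end---because Proposition~\ref{prop:sousAlloc} constrains \emph{all} sub-allocations---is a nice clarification that the paper leaves implicit.
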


  The proof is based on the execution of
  Algorithm~\ref{algo:testSeq}. This algorithm is similar in spirit to
  the one proposed by \cite{BramsKing05} but
  Algorithm~\ref{algo:testSeq} is more general because (i) it can
  involve non-strict preferences on objects, and (ii) it can conclude
  with non-sequenceability.

  \begin{proof} 
    We show that Algorithm~\ref{algo:testSeq} returns a sequence
    $\vect{\sigma}$ generating the input allocation ${\vect{\pi}}$ if and only if
    there is one. Suppose that the algorithm returns a sequence
    $\vect{\sigma}$. Then, by definition of the sequence (in the loop from line
    \ref{lgn:defSeq1} to line \ref{lgn:defSeq2}), at each step $t$,
    $i =\sigma_t$ can choose an object in $\pi_i$, that is one of
    her top objects. Conversely, suppose the algorithm returns
    \textbf{NonSeq}. Then, at a given step $t$,
    $\forall i$, $\best(\mathcal{O}', i) \cap \pi_i = \emptyset$.
    By definition, ${\vect{\pi}}^{|\mathcal{O}'}$ is therefore, at this step, a
    frustrating sub-allocation of ${\vect{\pi}}$. By
    Proposition~\ref{prop:sousAlloc}, ${\vect{\pi}}$ is thus
    non-sequenceable.
    The loop from line \ref{lgn:defSeq1} to line \ref{lgn:defSeq2}
    runs in time $O(n \times m)$, because searching
    for the top objects in the preferences of each agent is in $O(m)$. This loop being executed $m$
    times, the algorithm runs in 
    $O(n \times m^2)$.

    \begin{algorithm}
      \caption{Sequencing an allocation}
      \label{algo:testSeq}
      \Input{$I = \langle \mathcal{N}, \mathcal{O}, w \rangle$ and ${\vect{\pi}} \in \mathcal{P}(I)$}
      \Output{a sequence $\vect{\sigma}$ generating ${\vect{\pi}}$ or \NonSeq}
      
      $(\vect{\sigma}, \mathcal{O}') \gets (\langle\rangle, \mathcal{O})$\;
      
      \For{$t$ \From 1 \To $m$\label{lgn:defSeq1}}{
        \If{$\exists i$ such that $\best(\mathcal{O}', i) \cap \pi_i \neq \emptyset$}{
          Append $i$ to $\vect{\sigma}$\;
          let $k \in \best(\mathcal{O}', i) \cap \pi_i$\;
          $\mathcal{O}' \gets \mathcal{O}' \setminus \{k\}$\;
        }
        \lElse{
          \Return{\NonSeq\label{lgn:defSeq2}}
        }
      }
      \Return $\vect{\sigma}$\;
    \end{algorithm}
\end{proof}



\section{Pareto-optimality}
\label{sec:Pareto}

An allocation is Pareto-optimal if there is no other allocation
dominating it. In our context, allocation ${\vect{\pi}}'$ dominates
allocation ${\vect{\pi}}$ if for all agent $i$,
$u_i(\pi_i') \geq u_i(\pi_i)$ and
$u_j(\pi_j') > u_j(\pi_j)$ for at least one
agent $j$. 
When an allocation is generated from a sequence, in some sense, a weak
form of efficiency is applied to build the allocation: each successive
(picking) choice is ``locally'' optimal. This raises a natural
question: is every sequenceable allocation Pareto-optimal?

This question has already been extensively discussed independently by
\cite{AzizKWX16} and in a previous version of this work 
\citep{BL-COMSOC16}. We complete the discussion here to give
more insights about the implications of the previous results in
our framework.

\citet[Proposition 1]{BramsKing05} prove the equivalence between
sequenceability and Pareto-optimality. However, they have a different
notion of Pareto-optimality, because \hidden{they only have partial
  ordinal information about the agents' preferences. More precisely,
  in Brams and King's model,} the agents' preferences are given as
linear orders over \emph{objects}\hidden{ (\textit{e.g.}
  $1 \succ 2 \succ 3 \succ 4$)}. To be able to compare bundles, these
preferences are lifted on subsets using the \emph{responsive set
  extension} $\succ_{RS}$. \hidden{, which is similar to the one
  defined in the work by \cite{Aziz15fair}, and \cite{BEL-ECAI10}) for
  SCI-nets.}
  This extension
leaves many bundles incomparable \hidden{(\textit{e.g.} $14$ and $23$ if we
consider the order $1 \succ 2 \succ 3 \succ 4$).}
and leads to  
define \emph{possible} and \emph{necessary}
Pareto-optimality. Brams and King's notion is possible Pareto-optimality. 
\cite{Aziz15fair} show that, given a linear
order $\succ$ on objects and two bundles $\pi$ and $\pi'$,
$\pi \succ_{RS} \pi'$ if and only if $u(\pi) > u(\pi')$ for \emph{all}
additive utility functions $u$ compatible with $\succ$ (that is, such
that $u(k) > u(l)$ if and only if $k \succ l$). This characterization
of responsive dominance yields the following reinterpretation of Brams
and King's result:\hidden{

\begin{prop}[\cite{BramsKing05}]
  Let $\langle \succ_1, \dots, \succ_n \rangle$ be the
  profile of agents' ordinal preferences (represented as linear
  orders). } an allocation ${\vect{\pi}}$ is sequenceable if and only if for
  each other allocation ${\vect{\pi}}'$, there is a set
  $u_1, \dots, u_n$ of additive utility functions,
  respectively compatible with $\succ_1, \dots, \succ_n$
  such that $u_i({\pi_i}) > u_i({\pi_i}')$ for at least one
  agent $i$.
\hidden{\end{prop}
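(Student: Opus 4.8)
The plan is to reduce the statement to the combinatorial characterization of sequenceability in Proposition~\ref{prop:sousAlloc} and to connect the right-hand side to the existence of a dominating allocation. First I would observe that sequenceability depends only on the ordinal data $\succ_1,\dots,\succ_n$: since the preferences are strict on objects, for any weights $w$ compatible with the $\succ_i$ the set $\best(O,i)$ is the same singleton for every $O$, so Proposition~\ref{prop:sousAlloc} applies verbatim and says that $\vect{\pi}$ is sequenceable if and only if no sub-allocation of $\vect{\pi}$ is frustrating.

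Next I would unpack the negation of the right-hand side. The predicate ``$u_i(\pi_i) > u_i(\pi_i')$ for at least one $i$'' is a disjunction of per-agent conditions, each involving only $u_i$, and a profile is compatible exactly when each $u_i$ is compatible with $\succ_i$; hence the quantifier over profiles decouples. Thus the right-hand side \emph{fails} precisely when there is an allocation $\vect{\pi}'\neq\vect{\pi}$ such that $u_i(\pi_i)\le u_i(\pi_i')$ for every agent $i$ and every $u_i$ compatible with $\succ_i$ — call such a $\vect{\pi}'$ a \emph{dominating} allocation. By the responsive-dominance characterization recalled above this is exactly responsive Pareto domination of $\vect{\pi}$, but I will argue directly with the utility inequalities. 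It therefore suffices to prove the purely combinatorial equivalence: \emph{$\vect{\pi}$ admits a frustrating sub-allocation if and only if $\vect{\pi}$ admits a dominating allocation.}

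For the forward direction I would build a dominating allocation from a top-object cycle. Given a frustrating sub-allocation $\vect{\pi}^{|\mathcal{O}'}$, each agent's unique favorite $\mathrm{top}_i=\best(\mathcal{O}',i)$ lies in $\mathcal{O}'$ and, by frustration, is held by some other agent. Mapping each $i$ to the holder of $\mathrm{top}_i$ gives a fixed-point-free function on a finite set, hence a cycle $i_1\to\dots\to i_r\to i_1$ in which $i_{t+1}$ holds $\mathrm{top}_{i_t}$. These top objects are pairwise distinct (each is held by a distinct cycle agent), so cyclically reallocating them — giving each $i_t$ the object $\mathrm{top}_{i_t}$ in exchange for the $\mathrm{top}_{i_{t-1}}$ it was holding — yields a valid allocation $\vect{\pi}'$ in which every cycle agent strictly improves for \emph{all} compatible $u$ (it trades an object of $\mathcal{O}'$ for its strictly preferred $\mathcal{O}'$-favorite) and everyone else keeps her share. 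Since $\vect{\pi}'\neq\vect{\pi}$ and $u_i(\pi_i)\le u_i(\pi_i')$ holds for all $i$ and all compatible $u$, this $\vect{\pi}'$ is dominating.

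For the converse — which I expect to be the delicate step — I would take a dominating $\vect{\pi}'$ and let $D$ be the (nonempty, since $\vect{\pi}'\neq\vect{\pi}$) set of objects whose owner differs between $\vect{\pi}$ and $\vect{\pi}'$, and claim that $\vect{\pi}^{|D}$ is frustrating. Suppose not: some agent $i$ holds her $D$-favorite $k^{\ast}=\best(D,i)$ in $\vect{\pi}$. Since $k^{\ast}\in D$, agent $i$ loses $k^{\ast}$ in $\vect{\pi}'$, whereas every object she gains lies in $D\setminus\{k^{\ast}\}$ and is hence strictly worse than $k^{\ast}$ for $i$. Choosing $u_i$ compatible with $\succ_i$ so that $u_i(k^{\ast})$ exceeds the total weight of all objects ranked below it (e.g.\ a fast-decaying geometric assignment) forces $u_i(\pi_i') < u_i(\pi_i)$, contradicting that $\vect{\pi}'$ is dominating. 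Hence $\vect{\pi}^{|D}$ is frustrating. Combining the two implications with Proposition~\ref{prop:sousAlloc} and the reformulation of the right-hand side yields the claimed equivalence; the main obstacle is isolating the correct object set $D$ and the correct witnessing utility in this converse, the forward direction being a routine trading-cycle construction.
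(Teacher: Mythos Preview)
Your proof is correct, but it takes a different route from the paper. The paper does not prove this statement directly: it presents it as a \emph{reinterpretation} obtained by chaining two external results --- Brams and King's equivalence between sequenceability and possible Pareto-optimality (with respect to the responsive set extension), and Aziz et al.'s characterization of responsive dominance as dominance under \emph{all} compatible additive utilities. You instead give a self-contained combinatorial argument built on the paper's own Proposition~\ref{prop:sousAlloc}: you reduce the right-hand side (via decoupling of the utility profile) to the nonexistence of a universally dominating allocation, and then show directly that such a dominating $\vect{\pi}'$ exists if and only if $\vect{\pi}$ has a frustrating sub-allocation --- one direction via the standard top-object trading cycle, the other by isolating the symmetric-difference set $D$ and exhibiting a geometric utility witness. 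What your approach buys is independence from the cited results and an explicit view of the underlying combinatorics (indeed, your forward direction essentially reproves the necessity half of Brams and King's theorem, and your converse with the $D$-construction and the ``large top weight'' witness is precisely the responsive-dominance characterization specialized to this situation). The paper's approach is shorter on the page but pushes the real work into the two citations.
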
}

The latter notion of Pareto-optimality is very weak, because (unlike in our context) the
set of additive utility functions is not fixed --- we just have to find one
that works. Under our stronger notion, \hidden{\footnote{See also \citep{Aziz16realloc} for a related discussion
  about Pareto-optimality.} }
the equivalence between sequenceability and Pareto-optimality no longer holds.\footnote{Actually, since it is known \citep{KeijzerEtalADT09,Aziz16realloc} that testing Pareto-optimality with additive preferences in \conp-complete, and that testing sequenceability is in \pol (Proposition~\ref{prop:complexity}), they cannot be equivalent unless \pol = \conp.}

\begin{example} \label{ex:E} Let us consider the following instance:

  \[ \left(
    \begin{array}{rrr}
      5 & 4 & 2 \\
      8 & 2 & 1 
    \end{array}
  \right) \]

  The sequence $\langle 1, 2, 2 \rangle$ generates allocation
  $A = \langle 1, 23 \rangle$ giving utilities
  $\langle 5, 3 \rangle$. $A$ is then sequenceable but it is dominated by
  $B = \langle 23, 1 \rangle$, giving utilities $\langle 6, 8 \rangle$
  (and generated by $\langle 2, 1, 1 \rangle$). Observe that, under
  ordinal linear preferences, $B$ would not dominate $A$, but
  they would be incomparable.
\end{example}

The last example shows that a sequence of sincere choices does not
necessarily generate a Pareto-optimal allocation. What about the
converse? We can see, as a trivial corollary of the reinterpretation
of Brams and King's result in our terminology, that the answer is
positive \emph{if the preferences are strict on shares}. The following
result is more general, because it holds even without this assumption:

\begin{prop}[\citealp{AzizKWX16,BL-COMSOC16}] \label{prop:POseq} Every
  Pareto-optimal allocation is sequenceable.
\end{prop}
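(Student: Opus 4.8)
The plan is to prove the contrapositive of Proposition~\ref{prop:POseq}: if an allocation $\vect{\pi}$ is non-sequenceable, then it is not Pareto-optimal. By Proposition~\ref{prop:sousAlloc}, non-sequenceability means that $\vect{\pi}$ contains a frustrating sub-allocation $\vect{\pi}^{|\mathcal{O}'}$ on some $\mathcal{O}' \subseteq \mathcal{O}$; that is, no agent receives one of her top objects (restricted to $\mathcal{O}'$) inside $\mathcal{O}'$. The idea is to use this frustrating sub-allocation to build a strictly dominating allocation by permuting objects within $\mathcal{O}'$, while leaving everything outside $\mathcal{O}'$ untouched.

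First I would set up a directed graph on the agents who hold at least one object of $\mathcal{O}'$. For each such agent $i$, since $\vect{\pi}^{|\mathcal{O}'}$ is frustrating, there is a top object $k \in \best(\mathcal{O}', i)$ that is \emph{not} held by $i$; let $j$ be the agent who holds $k$, and draw an arc $i \to j$ (intuitively, ``$i$ covets an object currently held by $j$''). Every vertex has out-degree at least one, so on a finite vertex set the functional graph must contain a directed cycle $i_1 \to i_2 \to \dots \to i_r \to i_1$. Along this cycle, each agent $i_s$ covets a top object $k_s$ currently held by $i_{s+1}$ (indices mod $r$).

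Next I would perform the cyclic reallocation: reassign each coveted object $k_s$ from its current holder $i_{s+1}$ to the coveter $i_s$, giving a new allocation $\vect{\pi}'$ that differs from $\vect{\pi}$ only on the objects $\{k_1, \dots, k_r\} \subseteq \mathcal{O}'$. The key estimate is that each agent $i_s$ on the cycle gains her top object $k_s$ while giving up exactly one object $k_{s-1}$ of $\mathcal{O}'$; since $k_s$ is a top object of $i_s$ in $\mathcal{O}'$ and $k_{s-1} \in \mathcal{O}'$, we have $w(i_s, k_s) \geq w(i_s, k_{s-1})$ by additivity, so no agent on the cycle loses utility, and agents off the cycle are unaffected. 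This already shows $\vect{\pi}'$ weakly dominates $\vect{\pi}$.

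The main obstacle is securing \emph{strict} domination, i.e.\ that at least one agent strictly gains, since with ties ($w(i_s, k_s) = w(i_s, k_{s-1})$) the cyclic swap could be utility-neutral for everyone. To handle this I would argue that one can always choose the covet-arcs and hence the cycle so that at least one exchanged object is \emph{strictly} preferred to the one relinquished: because each agent on the cycle gives up an object that is \emph{not} one of her top objects in $\mathcal{O}'$ (as $\vect{\pi}^{|\mathcal{O}'}$ is frustrating) but receives one that \emph{is}, the relevant inequality $w(i_s, k_s) > w(i_s, k_{s-1})$ holds strictly for at least one $s$ on the cycle. One must verify this carefully—picking, for each agent, a covet-target that is strictly better than \emph{every} object she holds in $\mathcal{O}'$ (which exists precisely because she holds no top object of $\mathcal{O}'$) guarantees that every arc corresponds to a strict improvement, making every agent on the cycle a strict gainer. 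Hence $\vect{\pi}'$ strictly dominates $\vect{\pi}$, so $\vect{\pi}$ is not Pareto-optimal, completing the contrapositive.
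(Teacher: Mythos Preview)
Your proof is correct and follows essentially the same route as the paper: both prove the contrapositive by extracting a frustrating sub-allocation via Proposition~\ref{prop:sousAlloc}, building a cycle of ``coveting'' relations among the agents holding its objects, and performing the cyclic swap to obtain a dominating allocation. Your detour through weak domination is unnecessary, though: since the sub-allocation is frustrating, the object $k_{s-1}$ that agent $i_s$ relinquishes is never one of her top objects in $\mathcal{O}'$, so the strict inequality $w(i_s, k_s) > w(i_s, k_{s-1})$ holds immediately for \emph{every} $s$ on the cycle---exactly as the paper observes without the intermediate hedging.
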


Before giving the formal proof, we illustrate it on a concrete example
\cite[Example 5]{Bouveret2015CharacterizingConflicts}.

\begin{example} \label{ex:EFnonPOprefsStrictes} 
	Let us consider the following instance:
	
	\[
	W = \left(
	\begin{array}{ccccc}
	\dag \al{12} & 15 & \tikz[remember picture]\node[inner sep=0pt] (ul2_simon) {}; \dag{11} & \al  7 & 2\\
	2 & 12 & \al 7 & \dag 15 & \dag \al{11} \\
	15 & \dag \al{20} & 9 & 2 & 1 \tikz[remember picture]\node[inner sep=0pt] (br2_simon) {};
	\end{array}  \right)
	\]
	\tikz[remember picture]\draw[overlay,dotted] ($ (ul2_simon) + (-0.2, 0.45) $) rectangle ($ (br2_simon) + (0.35, -0.2) $);
	
	The circled allocation $\vect{\pi}$ is not sequenceable: indeed, every
	sequence that could generate it should start with
	$\langle 3,1,\dots \rangle$, leaving the frustrating sub-allocation
	$\vect{\rho}$ in a dotted box.
	
	Let us now choose an arbitrary agent who does not receive a top
	object in $\vect{\rho}$, for instance agent $a_1 = 1$. Let $o_1 = 3$
	be her top object (of weight 11 in this case). The agent receiving
	$o_1$ in $\vect{\rho}$ is $a_2=2$. This agent prefers object
	$o_2= 4$ (of weight 15), held by $a_1$, already encountered. We have
	built a cycle
	$ a_2 \stackrel{o_1}{\longrightarrow} a_{1}
	\stackrel{o_2}{\longrightarrow} a_2$, in other words
	$ 2 \stackrel{3}{\longrightarrow} 1 \stackrel{4}{\longrightarrow}
	2$, that tells us exactly how to build another sub-allocation
	dominating $\vect{\rho}$. This sub-allocation can be built by
	replacing in $\vect{\rho}$ the attributions
	$ (a_1 \gets o_2 ) (a_{2} \gets o_1)$ by the attributions
	$ (a_1 \gets o_1) (a_{2} \gets o_{2})$. Hence, each agent involved
	in the cycle obtains a strictly better object than the previous
	one. Doing the same substitutions in the initial allocation
	$\vect{\pi}$ yields an allocation $\vect{\pi}'$ that dominates
	$\vect{\pi}$ (marked with $\dag$ in the matrix $W$ above).
\end{example}

Now we will give the formal proof.\footnote{%
  This proof is similar to the one by \citet[Proposition 1,
  necessity]{BramsKing05}. However we give it entirely because it is
  more general and will be reused in Proposition~\ref{prop:CEEI}. 
\hidden{The central idea of trading cycle is classical and is used,
\textit{e.g.}, by \citet[page 79]{Varian1974} and \citet[Lemma 
2.2]{Lipton04} in the context of envy-freeness.}
  }
\begin{proof} As stated in the example, we will now prove the
  contraposition of the proposition: every non-sequen\-ceable
  allocation is dominated. Let $\vect{\pi}$ be a non-sequenceable
  allocation. From Proposition~\ref{prop:sousAlloc}, in a
  non-sequenceable allocation, there is at least one frustratring
  sub-allocation. Let $\vect{\rho}$ be such a sub-allocation (that can
  be $\vect{\pi}$ itself). We will, from $\vect{\rho}$, build another
  sub-allocation dominating it. Let us choose an arbitrary agent $a_1$
  involved in $\vect{\rho}$, receiving an object not among her top
  ones in $\vect{\rho}$. Let $o_1$ be a top object of $a_1$ in
  $\vect{\rho}$, and let $a_2$ ($\neq a_1$) be the unique agent
  receiving it in $\vect{\rho}$. Let $o_2$ be a top object of $a_2$.
  We can notice that $o_2 \neq o_1$ (otherwise $a_2$ would obtain one
  of her top objects and $\vect{\rho}$ would not be frustrating). Let
  $a_3$ be the unique agent receiving $o_2$ in $\vect{\rho}$, and so
  on.  Using this argument iteratively, we form a path starting from
  $a_1$ and alternating agents and objects, in which two successive
  agents and objects are distinct. Since the number of agents and
  objects is finite, we will eventually encounter an agent which has
  been encountered at a previous step of the path. Let $a_i$ be the
  first such agent and $o_k$ be the last object seen before her in the
  sequence ($a_i$ is the unique agent receiving $o_k$). We have built
  a cycle
  $ a_i \stackrel{o_{k}}{\longrightarrow} a_{k}
  \stackrel{o_{k-1}}{\longrightarrow} a_{k-1} \cdots a_{i+1}
  \stackrel{o_{i}}{\longrightarrow} a_i$ in which all the agents and
  objects are distinct, and that has at least two agents and two
  objects. From this cycle, we can modify $\vect{\rho}$ to build a new
  sub-allocation by giving to each agent in the cycle a top object
  instead of another less preferred object, all the agents not
  appearing in the cycle being left unchanged. More formally, the
  following attributions in $\vect{\rho}$ (and hence in $\vect{\pi}$):
  $ (a_i \gets o_k ) (a_{i+1} \gets o_i) \cdots (a_k \gets o_{k-1})$
  are replaced by:
  $ (a_i \gets o_i) (a_{i+1} \gets o_{i+1}) \cdots (a_k \gets o_{k})$
  where $ (a \gets o ) $ means that $o$ is attributed to $a$. The same
  substitutions operated in $\vect{\pi}$ yield an allocation
  $\vect{\pi}'$ that dominates $\vect{\pi}$.
\end{proof}

\hidden{
Before giving the formal proof, we illustrate it on a concrete example
\cite[Example 5]{Bouveret2015CharacterizingConflicts}.

\begin{example} \label{ex:EFnonPOprefsStrictes} 
 Let us consider the following instance:

  \[
  w = \left(
    \begin{array}{ccccc}
      2 & 12 & \tikz[remember picture]\node[inner sep=0pt] (ul2) {};\al 7 & \dag 15 & \dag \al{11} \\
      \dag \al{12} & 15 & \dag{11} & \al  7 & 2\\
      15 & \dag \al{20} & 9 & 2 & 1 \tikz[remember picture]\node[inner sep=0pt] (br2) {};
    \end{array}  \right)
  \]
  \tikz[remember picture]\draw[overlay,dotted] ($ (ul2) + (-0.2, 0.45) $) rectangle ($ (br2) + (0.35, -0.2) $);
 
  The circled allocation ${\vect{\pi}}$ is not sequenceable: indeed, every
  sequence that could generate it should start with
  $\langle 3,2,\dots \rangle$, leaving the frustrating sub-allocation
  ${\vect{\pi}'}$ in a dotted box.

  Let us now choose an arbitrary agent who does not receive a top
  object in ${\vect{\pi}'}$, for instance agent $a_1 = 2$. Let $o_1 = 3$ be
  her top object (of weight 11 in this case). The agent receiving
  $o_1$ in ${\vect{\pi}'}$ is $a_2=1$. This agent prefers object $o_2= 4$
  (of weight 15), held by $a_1$, already encountered. We have built a
  cycle
  $ (a_1 , o_1) \rightarrow (a_{2} , o_{2}) \rightarrow (a_1, o_{1})$,
  in other words $(2 , 3) \rightarrow (1 , 4) \rightarrow (2, 3)$,
  that tells us exactly how to build another sub-allocation dominating
  ${\vect{\pi}'}$. This sub-allocation can be built by replacing in
  ${\vect{\pi}'}$ the attributions $ (a_1 \gets o_2 ) (a_{2} \gets o_1)$ by
  the attributions $ (a_1 \gets o_1) (a_{2} \gets o_{2})$. Hence, each
  agent involved in the cycle obtains a strictly better object than
  the previous one. Doing the same substitutions in the initial
  allocation ${\vect{\pi}}$ yields an allocation ${\vect{\pi}}'$ that dominates
  ${\vect{\pi}}$ (marked with $\dag$ in the matrix $w$ above).
\end{example}

\begin{proof}
  The proof of this proposition directly follows from
  Observations~\ref{obs:si-wi} and \ref{obs:Pareto}, together with
  Proposition~\ref{pro:cycle-sequence}.
\end{proof}
}

\begin{corollary} \label{cor:nonPO} No frustrating allocation can be
  Pareto-optimal (equivalently, in every Pareto-optimal allocation, at
  least one agent receives a top object).
\end{corollary}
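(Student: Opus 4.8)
The plan is to derive this statement directly as a combination of Proposition~\ref{prop:sousAlloc} and Proposition~\ref{prop:POseq}, so no new machinery is needed. First I would observe that a frustrating allocation $\vect{\pi}$ is itself a frustrating sub-allocation of $\vect{\pi}$ (taking $\mathcal{O}' = \mathcal{O}$, so that ${\vect{\pi}}^{|\mathcal{O}'} = \vect{\pi}$). Hence condition (B) of Proposition~\ref{prop:sousAlloc} fails for $\vect{\pi}$, and by the equivalence (A)$\Leftrightarrow$(B) the allocation $\vect{\pi}$ is non-sequenceable. This is exactly the remark already made at the start of Section~\ref{sec:seqAlloc}: the first agent picked by any generating sequence must receive one of her top objects, which by definition no agent does in a frustrating allocation.

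Next I would invoke Proposition~\ref{prop:POseq}, which states that every Pareto-optimal allocation is sequenceable. Reading this contrapositively, every non-sequenceable allocation fails to be Pareto-optimal. Chaining the two implications --- frustrating $\Rightarrow$ non-sequenceable $\Rightarrow$ not Pareto-optimal --- yields the first formulation of the corollary.

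Finally, the parenthetical restatement is just the contrapositive spelled out: an allocation in which no agent receives a top object is by definition frustrating, so asserting that no frustrating allocation is Pareto-optimal is the same as asserting that in every Pareto-optimal allocation at least one agent receives a top object. Since each link in the argument is an immediate appeal to an already-proved result, I do not expect any genuine obstacle; the only point requiring care is to apply Proposition~\ref{prop:sousAlloc} to the whole allocation rather than to a proper sub-allocation, so that the frustrating witness is $\vect{\pi}$ itself and non-sequenceability follows at once.
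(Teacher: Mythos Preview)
Your proposal is correct and matches the paper's intended argument exactly: the corollary is stated immediately after Proposition~\ref{prop:POseq} with no separate proof, and the implicit reasoning is precisely the chain you give (frustrating $\Rightarrow$ non-sequenceable by the remark opening Section~\ref{sec:seqAlloc} or Proposition~\ref{prop:sousAlloc}, then non-sequenceable $\Rightarrow$ not Pareto-optimal by the contrapositive of Proposition~\ref{prop:POseq}).
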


Proposition~\ref{prop:POseq} implies that there exists, for a given
instance, three classes of allocations: (1) non-sequenceable
(therefore non Pareto-optimal) allocations, (2) sequenceable but non
Pareto-optimal allocations, and (3) Pareto-optimal (hence
sequenceable) allocations. These three classes define a ``scale of
efficiency'' that can be used to characterize the allocations. 
What is interesting and new here is the intermediate level.
We will see that this scale can be further detailed. 



\section{Cycle deals-optimality}
\label{sec:deals}

Pareto-optimality can be thought as a reallocation of objects among
agents using improving \emph{deals} \citep{Sandholm98}, as we have
seen, to some extent, in the proof of
Proposition~\ref{prop:POseq}. \emph{Trading cycles} or \emph{cycle
  deals} constitute a sub-class of deals,
which is classical and used, \textit{e.g.}, by \citet[page
79]{Varian1974} and \citet[Lemma 2.2]{Lipton04} in the context of
envy-freeness. Trying to link efficiency concepts with various notions
of deals is thus a natural idea.


\begin{definition}
  Let $\langle \mathcal{N}, \mathcal{O}, w \rangle$ be an add-MARA
  instance and $\vect{\pi}$ be an allocation of this instance. A
  $(N, M)$-cycle deal of $\vect{\pi}$ is a sequence of transfers of
  items
  $d = \langle (i_1, \mathcal{X}_1), \dots, (i_N, \mathcal{X}_N)
  \rangle$, where, for each $j$, $i_j \in
  \mathcal{N}$,$\mathcal{X}_j \subseteq \pi_j$, and
  $|\mathcal{X}_j| \leq M$. The allocation $\vect{\pi}[\leftarrow d]$
  resulting from the application of $d$ to $\vect{\pi}$ is defined as
  follows:
  \begin{itemize}
  \item
    $\pi[\leftarrow d]_{i_j} = \pi_{i_j} \setminus \mathcal{X}_j \cup
    \mathcal{X}_{j-1}$ for $j \in \{ 2, \dots, N \}$;
  \item
    $\pi[\leftarrow d]_{i_1} = \pi_{i_1} \setminus \mathcal{X}_1 \cup
    \mathcal{X}_{N}$;
  \item $\pi[\leftarrow d]_{i} = \pi_{i}$ if
    $i \not\in \{i_1, \dots, i_N\}$.
  \end{itemize}
  A cycle deal
  $\langle (i_1, \mathcal{X}_1), \dots, (i_N, \mathcal{X}_N) \rangle$
  will be written
  $ i_1 \stackrel{\mathcal{X}_1}{\longrightarrow} i_2 \dots i_{N-1}
  \stackrel{\mathcal{X}_{N-1}}{\longrightarrow} i_N
  \stackrel{\mathcal{X}_N}{\longrightarrow} i_1$.
\end{definition}

In other word, in a cycle deal (we omit $N$ and $M$ when they are not
necessary to understand the context), each agent gives a subset of at
most $M$ items from her share to the next agent in the sequence and
receives in return a subset from the previous agent. $(N, 1)$-cycle
deals will be denoted by $N$-cycle deals. $2$-cycle deals
will be called \emph{swap}-deals. Among these cycle deals, some are
more interesting: those where each agent improves her utility by
trading objects. More formally, a deal $d$ will be called
\emph{weakly improving} if $u_i(\pi[\leftarrow d]_i) \geq u_i(\pi_i)$
$\forall i \in \mathcal{N}$ with at least one of these inequalities
being strict, and \emph{strictly improving} if all these inequalities are strict.


Intuitively, if it is possible to improve an allocation by applying an
improving cycle deal, then it means that this allocation is
inefficient. Reallocating the items according to the deal will make
everyone better-off. It is thus natural to derive a concept of
efficiency from this notion of cycle-deal.

\begin{definition}
  An allocation is said to be $>$-$(N, M)$-Cycle Optimal (resp.
  $\geq$-$(N, M)$-Cycle Optimal) if it does not admit any strictly
  (resp. weakly) improving $(K, M)$-cycle deal for any $K \leq N$.
\end{definition}


We begin with easy observations. First, $\geq$-cycle optimality
implies $>$-cycle optimality, and these two notions become equivalent
when the preferences are strict on shares. Moreover, restricting the
size of the cycles and the size of the bundles exchange yield less
possible deals and hence lead to weaker optimality notions. 
\hidden{
We have the following relations:
\begin{observation}\label{obs:$>$-wi}
  \[
    \vect{\pi} \text{ is } \text{ $\geq$-$(N, M)$-CO} \Rightarrow \vect{\pi} \text{ is } \text{ $>$-$(N, M)$-CO},
  \]
  \[
    \vect{\pi} \text{ is } \text{ $\geq$-$(N, M)$-CO} \Rightarrow \vect{\pi} \text{ is } \text{ $\geq$-$(l, l')$-CO $\forall l \leq N$ and $l' \leq M$}
  \]
  \[
    \vect{\pi} \text{ is } \text{ $>$-$(N, M)$-CO} \Rightarrow \vect{\pi} \text{ is } \text{ $>$-$(l, l')$-CO $\forall l \leq N$ and $l' \leq M$}
  \]
\end{observation}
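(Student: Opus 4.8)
The plan is to prove each implication by a class-inclusion argument: the deals forbidden by the stronger notion always contain those forbidden by the weaker one, so non-existence of an improving deal in the larger class forces non-existence in the smaller class. I would dispatch the three implications of the observation first, as they are purely combinatorial, and then treat the ``equivalence under strictness on shares'' remark, which is the only part needing a genuine argument.

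First I would record the trivial inclusion between the two improvement notions: every strictly improving deal is, by definition, also weakly improving. Hence if $\vect{\pi}$ admits no weakly improving $(K,M)$-cycle deal for any $K \leq N$ (that is, $\vect{\pi}$ is $\geq$-$(N,M)$-cycle optimal), then it a fortiori admits no strictly improving such deal, giving $\geq$-$(N,M)$-CO $\Rightarrow$ $>$-$(N,M)$-CO. For the monotonicity in the parameters, I would observe that whenever $l \leq N$ and $l' \leq M$, any $(K,l')$-cycle deal with $K \leq l$ is already a $(K,M)$-cycle deal with $K \leq N$, since the constraints $|\mathcal{X}_j| \leq l' \leq M$ and $K \leq l \leq N$ are merely relaxed. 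Thus the improving deals witnessing the failure of $\geq$-$(l,l')$-CO (resp. $>$-$(l,l')$-CO) form a subset of those witnessing the failure of $\geq$-$(N,M)$-CO (resp. $>$-$(N,M)$-CO), and both remaining implications follow by contraposition.

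The substantive step is to show that, when preferences are strict on shares, $>$-cycle optimality implies $\geq$-cycle optimality (the converse being the first implication). I would argue the contrapositive: from a weakly improving $(K,M)$-cycle deal $d = \langle (i_1,\mathcal{X}_1), \dots, (i_K,\mathcal{X}_K) \rangle$ I build a strictly improving one of no larger size. The key lemma is that, under strictness on shares, \emph{every} transfer $\mathcal{X}_j$ in a weakly improving cycle deal is nonempty. Indeed, since shares are disjoint, the new share of $i_j$ is $\pi_{i_j} \setminus \mathcal{X}_j \cup \mathcal{X}_{j-1}$, which equals $\pi_{i_j}$ exactly when $\mathcal{X}_{j-1} = \mathcal{X}_j = \emptyset$; and if the incoming transfer is empty ($\mathcal{X}_{j-1} = \emptyset$) while the outgoing one is not ($\mathcal{X}_j \neq \emptyset$), then $i_j$ strictly loses objects, so by additivity and non-negativity of the weights, together with strictness on shares, her utility strictly decreases, contradicting weak improvement. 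Hence an empty incoming transfer forces an empty outgoing one; propagating this around the cycle shows that either all transfers are empty (impossible, since a weakly improving deal changes at least one share) or none is. With all transfers nonempty, every agent of the cycle sees her share genuinely change, and strictness on shares upgrades each inequality $u_{i_j}(\pi[\leftarrow d]_{i_j}) \geq u_{i_j}(\pi_{i_j})$ to a strict one; thus $d$ itself is already strictly improving.

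The main obstacle is precisely this structural lemma: the three inclusion-based implications are immediate, but the equivalence requires ruling out ``partially inert'' cycles in which some agents neither give nor receive. Non-negativity of the weights is essential (it guarantees that shedding objects can never increase utility), and so is strictness on shares (which forbids a changed share with unchanged utility and turns each weak inequality into a strict one). Without either hypothesis, a weakly improving deal that is not strictly improving can persist, which is exactly why the equivalence is claimed only in the strict-on-shares case.
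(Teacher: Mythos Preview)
Your three class-inclusion arguments are correct and coincide with the paper's own treatment, which simply notes that strictly improving deals are weakly improving and that restricting $N$ and $M$ shrinks the set of admissible deals; the paper offers no further detail. You additionally supply a full proof of the equivalence under strictness on shares (which the paper asserts without argument), and your propagation lemma---an empty incoming transfer forces an empty outgoing one, hence all transfers are nonempty, hence all shares change, hence all inequalities are strict---is sound given disjoint shares and non-negative weights.
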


These implications are strict, meaning that the reverse implications
do not hold.}
%
%
%
%
Note that for $N' \leq N$ and $M' \leq M$ $>$-$(N, M)$-cycle-optimality
and $\geq$-$(N', M')$-cycle-optimality are incomparable.  These
observations show that cycle-deal optimality notions form a
(non-linear) hierarchy of efficiency concepts of diverse
strengths. The natural question is whether they can be related to
sequenceability and Pareto-optimality. Obviously, Pareto-optimality
implies both $>$-cycle-optimality and $\geq$-cycle-optimality. \hidden{The
  following proposition, whose proof is inspired by \citet[Proposition
  1,necessity]{BramsKing05}, is much stronger:} An easy adaptation of
the proof of Proposition~\ref{prop:POseq} leads to the following
stronger result:

\begin{prop}\label{pro:cycle-sequence}
  An allocation $\vect{\pi}$ is sequenceable if and only if it
  is $>$-$n$-cycle optimal (with $n = |\mathcal{N}|$).
\end{prop}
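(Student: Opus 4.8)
The plan is to prove both directions by connecting frustrating sub-allocations (which characterize non-sequenceability via Proposition~\ref{prop:sousAlloc}) to strictly improving $n$-cycle deals. The key observation is that the cycle built in the proof of Proposition~\ref{prop:POseq} is exactly a single-object trading cycle, i.e.\ an $n$-cycle deal: each agent passes one object to the next and each strictly improves her utility.

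\medskip

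\noindent\textbf{($\vect{\pi}$ not sequenceable $\Rightarrow$ $\vect{\pi}$ not $>$-$n$-cycle optimal).} First I would assume $\vect{\pi}$ is non-sequenceable. By Proposition~\ref{prop:sousAlloc}, there is a frustrating sub-allocation $\vect{\rho}$ on some $\mathcal{O}' \subseteq \mathcal{O}$. I then run exactly the construction from the proof of Proposition~\ref{prop:POseq}: starting from an arbitrary agent receiving a non-top object in $\vect{\rho}$, follow top objects through the agents holding them, and, since everything is finite, extract a cycle $a_i \stackrel{o_k}{\longrightarrow} a_k \cdots a_{i+1} \stackrel{o_i}{\longrightarrow} a_i$ in which all agents and objects are distinct. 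Reading this cycle as a deal $d$ where each agent hands over the single object she currently holds and receives her (strictly preferred) top object, $d$ is a strictly improving $(K,1)$-cycle deal with $K \leq n$ applied to $\vect{\pi}$ (the transfers inside $\vect{\rho}$ carry over to $\vect{\pi}$ since $\vect{\rho}$ is a sub-allocation). Hence $\vect{\pi}$ admits a strictly improving $K$-cycle deal with $K \leq n$, so it is not $>$-$n$-cycle optimal.

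\medskip

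\noindent\textbf{($\vect{\pi}$ not $>$-$n$-cycle optimal $\Rightarrow$ $\vect{\pi}$ not sequenceable).} Conversely, suppose $\vect{\pi}$ admits a strictly improving $K$-cycle deal $d = i_1 \stackrel{\{o_1\}}{\longrightarrow} i_2 \cdots i_K \stackrel{\{o_K\}}{\longrightarrow} i_1$ with $K \leq n$, where each transferred bundle is a single object. Let $\mathcal{O}' = \{o_1, \dots, o_K\}$ be the set of traded objects and consider the sub-allocation $\vect{\pi}^{|\mathcal{O}'}$. Since $d$ strictly improves every participating agent, the object each agent holds in $\vect{\pi}^{|\mathcal{O}'}$ is strictly worse for her than the object she receives through $d$, which is also in $\mathcal{O}'$. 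Therefore no agent holds a top object of $\mathcal{O}'$ in $\vect{\pi}^{|\mathcal{O}'}$, meaning $\vect{\pi}^{|\mathcal{O}'}$ is frustrating. By Proposition~\ref{prop:sousAlloc}, $\vect{\pi}$ is non-sequenceable.

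\medskip

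\noindent The main obstacle is the bookkeeping in the second direction: I must verify that a strictly improving single-object cycle forces every traded object to be non-top \emph{within the restricted set} $\mathcal{O}'$, not merely within all of $\mathcal{O}$. This is exactly where strict improvement is essential---if $o_j$ were a top object of agent $i_{j+1}$ on $\mathcal{O}'$, that agent could not strictly gain by trading it away inside $\mathcal{O}'$, contradicting strictness. Getting this restriction right, and confirming that the cycle extracted in the first direction indeed uses distinct objects so that it is a genuine $(K,1)$-cycle deal with $K \leq n$, are the only delicate points; both follow directly by reusing the argument already established in Proposition~\ref{prop:POseq}.
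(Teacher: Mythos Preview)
Your proof is correct and follows essentially the same approach as the paper: both directions go through Proposition~\ref{prop:sousAlloc}, with the forward direction reusing the trading-cycle construction of Proposition~\ref{prop:POseq} and the converse recognising the traded objects as a frustrating sub-allocation. Your converse is in fact argued more carefully than the paper's terse version, which only remarks that the cycle yields a Pareto-dominating allocation.
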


\begin{proof}
  Let $\vect{\pi}$ be a non-sequenceable allocation. Then by
  Proposition~\ref{prop:sousAlloc}, there is at least one frustrating
  sub-allocation in ${\vect{\pi}}$. Using the same line of arguments
  as in the proof of Proposition~\ref{prop:POseq} we can build a
  $>$-$k$-cycle. Hence $\vect{\pi}$ is not $>$-cycle
  optimal. Conversely, suppose that $\vect{\pi}$ is a
  $>$-$k$-cycle. Then obviously this cycle yields an allocation that
  Pareto-dominates $\vect{\pi}$.
\end{proof}

\hidden{
\begin{proof} Let ${\vect{\pi}}$ be a non-sequenceable
  allocation. From Proposition~\ref{prop:sousAlloc},\hidden{ in a
  non-sequenceable allocation,} there is at least one frustrating
  sub-allocation in ${\vect{\pi}}$. Let ${\vect{\pi}'}$ be such a sub-allocation\hidden{ (that
  can be ${\vect{\pi}}$ itself)}. We will, from ${\vect{\pi}'}$, build
  another sub-allocation dominating it. Let us choose an arbitrary
  agent $a_1$ involved in ${\vect{\pi}'}$, receiving an object not
  among her top ones in ${\vect{\pi}'}$. Let $o_1$ be a top object of
  $a_1$ in ${\vect{\pi}'}$, and let $a_2$ ($\neq a_1$) be the unique
  agent receiving it in ${\vect{\pi}'}$. Let $o_2$ be a top object of
  $a_2$.  We can notice that $o_2 \neq o_1$ (otherwise \hidden{ $a_2$ would
  obtain one of her top objects and} ${\vect{\pi}'}$ would not be
  frustrating). Let $a_3$ be the unique agent receiving $o_2$ in
  ${\vect{\pi}'}$, and so on.  Using this argument iteratively, we
  form a path starting from $a_1$ and alternating agents and objects,
  in which two successive agents and objects are distinct. Since the
  number of agents and objects is finite, we will eventually encounter
  an agent which has been encountered at a previous step of the
  path. Let $a_i$ be the first such agent and $o_k$ be the last object
  seen before her in the sequence ($a_i$ is the unique agent receiving
  $o_k$). We have built a cycle deal
  $ \langle (a_k, o_{k-1}), (a_{k-1} , o_{k-2}), \cdots, (a_i , o_{k})
  \rangle$ in which all the agents and objects are distinct, and that
  has at least two agents and two objects. This cycle applied to
  ${\vect{\pi}'}$ builds a new sub-allocation by giving to each agent
  in the cycle a top object instead of another less preferred
  object. Applied to ${\vect{\pi}}$, this cycle is a strictly
  improving $n$-cycle deal, and hence, $\vect{\pi}$ is not
  $>$-$n$-cycle optimal.

  Conversely, let $\vect{\pi}$ be an allocation and
  $\langle (i_1, o_{l_1}), \dots, (i_k, o_{l_k}) \rangle$ be a
  $>$-$k$-cycle ($k \leq n$). Then, obviously,
  $\vect{\pi}^{|\{ o_{l_1}, \dots, o_{l_k} \}|}$ is a frustrating
  allocation, because every agent strictly prefers the object given by
  the previous agent in the sequence. By
  Proposition~\ref{prop:sousAlloc}, $\vect{\pi}$ is not sequenceable.
\end{proof}
}

The scale of efficiency introduced in
Section~\ref{sec:Pareto} can then be refined with an intermediate hierarchy
of $>$-cycle optimality notions between sequenceable and
non-sequenceable allocations. As for $\geq$-cycle optimality, it forms a
parallel hierarchy between Pareto-optimal and non-sequenceable
allocations.\footnote{Note that $\geq$-CO and sequenceability are
  incomparable notions. There exist allocations \hidden{--- that we omit for
  space restrictions ---} which are $\geq$-swap optimal but not
  sequenceable and the other way around.}

A corollary of Propositions~\ref{prop:complexity} and
\ref{pro:cycle-sequence} is that checking whether an allocation is
$>$-$n$-cycle optimal can be made in polynomial time (by checking
whether it is sequenceable).

More generally, we can observe that checking whether an allocation is
$(k,k')$-cycle optimal can be done by iterating over all $k$-uples of
agents, and for each one iterating over all possible transfers
involving less than $k'$ objects.\footnote{This is sufficient to also
  run through all the cycles involving strictly less than $k$ agents:
  such a cycle can be simulated just by appending at the end of the
  cycle some agents whose role is just to pass the objects they
  receive to the next agent.} In total, there are $k!\binom{n}{k}$
$k$-uples of agents (which is upper-bounded by $n^{k+1}$). For each
$k$-uple, there are at most
${\left(\sum_{k'' = 0}^{k'}\binom{m}{k''}\right)}^k$ possible
transfers, which is again upper-bounded by $(1 + m)^{kk'}$. Hence, in
total, checking whether an allocation is $(k,k')$-cycle optimal can be
done in time $O(n^{k+1}\times (1+m)^{kk'})$. This is polynomial in $n$
and $m$ if both $k$ and $k'$ are bounded (as for swap deals).


\section{Restricted Domains}
\label{sec:restricted-domains}

We now study the impact of several preference 
restrictions on the hierarchy of efficiency notions introduced in  Section~\ref{sec:deals}. 

\paragraph{Strict preferences on objects}
When the preferences are strict on objects, then obviously every
sequence generates exactly one allocation. The following proposition
is stronger and shows that the reverse is also true:

\begin{prop} \label{prop:StrObj} Preferences are strict on objects iff $s(I)$ is a mapping from $\mathcal{S}(I)$
  to $\mathcal{P}(I)$.
\end{prop}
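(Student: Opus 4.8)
The plan is to prove the two implications separately. The forward direction (strict on objects $\Rightarrow$ $s(I)$ is a mapping) is the routine one: if every $w(i,\cdot)$ is injective, then for any agent $i$ and any nonempty $O \subseteq \mathcal{O}$ the set $\best(O, i)$ is a singleton, so the choice instruction on line~\ref{line:guess} of Algorithm~\ref{algo:ss} has exactly one admissible option at every step. Hence the algorithm is deterministic and each sequence $\vect{\sigma}$ generates exactly one allocation; since it always generates at least one, $s(I)$ is total and single-valued, i.e.\ a mapping.

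For the converse I would argue by contraposition: assuming the preferences are not strict on objects, I will exhibit a \emph{single} sequence generating two distinct allocations. Fix an agent $i$ and two objects $k \neq l$ with $w(i, k) = w(i, l) = v$, and set $A = \{o \in \mathcal{O} : w(i, o) > v\}$ with $a = |A|$. Pick any other agent $j \neq i$ (this is where the argument uses $n \geq 2$). Consider the sequence $\vect{\sigma}$ consisting of $a+1$ copies of $i$ followed by $m - a - 1$ copies of $j$.

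I would then describe two valid executions of $\vect{\sigma}$ that agree on the first $a$ steps but diverge afterwards. During the first $a$ steps agent $i$ can only pick objects of $A$: as long as fewer than $a$ objects have been removed, some object of weight $> v$ still remains, so $\best(\cdot, i)$ is a subset of $A$; after these steps the removed set is exactly $A$ and all remaining objects have weight $\leq v$ for $i$. At step $a+1$ both $k$ and $l$ belong to $\best(\mathcal{O} \setminus A, i)$, so agent $i$ may pick $k$ (execution~A) or $l$ (execution~B). In the remaining $m - a - 1$ steps, $j$ simply takes every object that is left. Execution~A then yields $\pi_i = A \cup \{k\}$ and $\pi_j = \mathcal{O} \setminus (A \cup \{k\})$, while execution~B yields $\pi_i = A \cup \{l\}$ and $\pi_j = \mathcal{O} \setminus (A \cup \{l\})$, all other agents receiving nothing in both. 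Since $k \neq l$ and $i \neq j$, these two allocations differ, so $\vect{\sigma}$ generates at least two allocations and $s(I)$ is not a mapping.

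The main obstacle — and the reason the naive idea of merely pointing at a tie does not suffice — is to ensure the tie-break genuinely changes the \emph{final} allocation instead of being reabsorbed later (for instance, the all-$i$ sequence branches at the tie yet always gives agent $i$ every object). The device of handing all the leftover objects to a distinct agent $j$ is precisely what forces the non-chosen tied object into a different share across the two runs, and the preliminary phase in which $i$ drains the objects it ranks strictly above $v$ is what lets the tie surface even when it is not at $i$'s global maximum. I would also record the harmless edge case $n = 1$ (where $s(I)$ is trivially a mapping regardless of ties), which is what motivates the standing assumption that there are at least two agents.
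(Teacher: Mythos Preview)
Your proof is correct and follows essentially the same approach as the paper: the forward direction is identical, and for the converse you build the very same sequence ($a+1$ copies of the tied agent followed by copies of a second agent) that the paper uses. You argue the point more carefully than the paper does---in particular your explanation of why the divergent choice at step $a+1$ actually survives to the final allocation, and your remark on the $n=1$ edge case, make explicit what the paper leaves to the reader---but the underlying idea is the same.
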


\begin{proof}  
  If preferences are strict on objects, then each agent has only one
  possible choice at her turn in the sequence of sincere choices \hidden{ line~\ref{line:guess} of Algorithm~\ref{algo:ss}}
  and hence every sequence generates one and only one allocation.

  Conversely, if preferences are not strict on objects, at least one
  agent (suppose w.l.o.g. agent 1) gives the same weight to two
  different objects. Suppose that there is at least $t$ objects ranked
  above. Then obviously, the following sequence
  $ \underbrace{111...111}_{t + 1\text{
      times}}\underbrace{222...222}_{m - t - 1\text{ times}}
  $
  generates two allocations, depending on agent 1's choice at step
  $t+1$.
\end{proof}


\paragraph{Same order preferences}
We say that the agents have
\emph{same order preferences}
\hidden{\citep{Bouveret2015CharacterizingConflicts}} if there is a permutation
$\eta : \mathcal{O} \mapsto \mathcal{O}$ such that for each agent $i$
and each pair of objects $k$ and $l$, if $\eta(k) < \eta(l)$ then
$w(i, \eta(k)) \geq w(i, \eta(l))$.

\begin{prop} \label{prop:poi} All the allocations of an instance with
  same order preferences are sequenceable (and actually cycle-deal optimal). 
  Conversely, if all the
  allocations of an instance are sequenceable, then this instance has
  same order preferences.
\end{prop}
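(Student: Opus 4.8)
The plan is to prove the two directions separately, in both cases routing everything through the frustration characterization of Proposition~\ref{prop:sousAlloc}. Throughout I use the fact that same order preferences simply amount to the existence of a common weak order on the objects: reading $\eta^{-1}(1), \eta^{-1}(2), \dots, \eta^{-1}(m)$ off the defining permutation yields a list $c_1, c_2, \dots, c_m$ of all objects with $w(i, c_1) \geq w(i, c_2) \geq \dots \geq w(i, c_m)$ for \emph{every} agent $i$.

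For the forward direction, I would fix an arbitrary allocation $\vect{\pi}$ and an arbitrary subset $\mathcal{O}' \subseteq \mathcal{O}$, and let $c^\star$ be the object of $\mathcal{O}'$ appearing earliest in the common order. Since every agent ranks $c^\star$ weakly above every other object of $\mathcal{O}'$, we have $c^\star \in \best(\mathcal{O}', i)$ for all $i$; in particular the agent holding $c^\star$ in $\vect{\pi}^{|\mathcal{O}'}$ receives one of her top objects, so $\vect{\pi}^{|\mathcal{O}'}$ is non-frustrating. As $\mathcal{O}'$ was arbitrary, Proposition~\ref{prop:sousAlloc} yields that $\vect{\pi}$ is sequenceable. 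The parenthetical claim then comes for free: by Proposition~\ref{pro:cycle-sequence} sequenceability is equivalent to $>$-$n$-cycle optimality, so every allocation is $>$-cycle optimal. (The same top-object remark explains this directly, since the agent who gives away the best object of a cycle's object set can only receive weakly worse objects and hence never strictly gains.)

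For the converse I would argue by contraposition, first reducing ``not same order'' to a concrete local obstruction. The key lemma is that a common weak order exists if and only if no two agents strictly disagree on a pair of objects, i.e.\ there are no agents $i, i'$ and objects $k, l$ with $w(i,k) > w(i,l)$ and $w(i',l) > w(i',k)$. The nontrivial implication is that the absence of such a disagreement lets one define the relation ``$k \succeq^\star l$ iff every agent weakly prefers $k$ to $l$'' and check that it is a \emph{total} preorder: totality uses exactly the no-disagreement hypothesis, while transitivity is inherited from each agent's individual weak order. I expect verifying this lemma to be the main (though routine) obstacle.

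Assuming the lemma, if the instance is not same order I obtain a strict disagreement $w(i,k) > w(i,l)$ and $w(i',l) > w(i',k)$, with necessarily $i \neq i'$ and $k \neq l$. I then build an allocation $\vect{\pi}$ giving object $l$ to agent $i$, object $k$ to agent $i'$, and distributing the remaining objects arbitrarily. In the sub-allocation on $\mathcal{O}' = \{k, l\}$, agent $i$ holds $l$ while $\best(\{k,l\}, i) = \{k\}$, agent $i'$ holds $k$ while $\best(\{k,l\}, i') = \{l\}$, and every other agent holds nothing; hence no agent receives a top object and $\vect{\pi}^{|\{k,l\}}$ is frustrating. By Proposition~\ref{prop:sousAlloc}, $\vect{\pi}$ is non-sequenceable, which establishes the contrapositive and completes the proof.
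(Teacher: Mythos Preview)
Your argument is correct and follows the same route as the paper: both directions go through the frustration characterization (Proposition~\ref{prop:sousAlloc}), and the converse exhibits the same two-object frustrating sub-allocation built from a pair of agents who disagree on a pair of objects. Your explicit lemma---that a common weak order exists iff no two agents \emph{strictly} disagree on some pair---is a worthwhile addition rather than mere bookkeeping: the frustration argument genuinely needs \emph{both} inequalities to be strict (if one of the two agents is indifferent between $k$ and $l$ she automatically holds a top object, and the two-object sub-allocation is not frustrating), and your lemma is exactly what guarantees that such a strict disagreement can always be extracted from ``not same order''. The paper's proof asserts only that one of the two inequalities is strict, which is not quite enough for the frustration step; your version closes that gap. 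Your handling of the parenthetical cycle-deal claim via Proposition~\ref{pro:cycle-sequence} is also fine (the paper's proof does not address it explicitly).
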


\begin{proof} Let $I$ be an instance with same order preferences, and
  let ${\vect{\pi}}$ be an arbitrary allocation.  In every sub-allocation
  of ${\vect{\pi}}$ at least one agent obtains a top object (because the
  preference order is the same among agents) and hence cannot be
  frustrating. By Proposition~\ref{prop:sousAlloc}, ${\vect{\pi}}$ is
  sequenceable.

  Conversely, let us assume for contradiction that $I$ is an instance
  not having same order preferences. Then there are two distinct
  objects $k$ and $l$ and two distinct agents $i$ and $j$ such that
  $w(i,k) \geq w(j,k)$ and $w(i,l) \leq w(j,l)$, one of the two
  inequalities being strict (assume w.l.o.g. the first one). The
  sub-allocation ${\vect{\pi}}^{|\{k, l\}}$ such that
  $\pi^{|\{k, l\}}_i = \{l\}$ and $\pi^{|\{k, l\}}_j = \{k\}$ is
  frustrating. By Proposition~\ref{prop:sousAlloc}, every allocation
  ${\vect{\pi}}$ containing this frustrating sub-allocation (hence
  such that $l \in \pi_i$ and $k \in \pi_j$) is non-sequenceable.
\end{proof}

Let us now characterize the instances for which $s(I)$ is a one-to-one
correspondence.

\begin{prop} \label{prop:SOP}
  For a given instance, the following two statements are equivalent.
  \begin{itemize}
  \item[(A)] Preferences are strict on objects and in the same order.
  \item[(B)] The relation $s(I)$ is a one-to-one correspondence.
  \end{itemize}
\end{prop}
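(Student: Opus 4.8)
The plan is to unpack the phrase ``one-to-one correspondence'' into its two constituent properties --- being a well-defined mapping and being surjective --- and to obtain each of these from one of the two preceding propositions, bridging the gap between surjectivity and injectivity by a simple cardinality argument. Concretely, I would rely on Proposition~\ref{prop:StrObj} (strictness on objects $\Leftrightarrow$ $s(I)$ is a mapping), on Proposition~\ref{prop:poi} (all allocations sequenceable $\Leftrightarrow$ same order preferences), and on the fact recalled earlier that $|\mathcal{S}(I)| = |\mathcal{P}(I)| = n^m$.

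First I would prove (A) $\Rightarrow$ (B). Assuming preferences are strict on objects and in the same order, strictness on objects gives, via Proposition~\ref{prop:StrObj}, that $s(I)$ is a mapping from $\mathcal{S}(I)$ to $\mathcal{P}(I)$, i.e. every sequence generates exactly one allocation. Being in the same order gives, via the first part of Proposition~\ref{prop:poi}, that every allocation is sequenceable, so that this mapping is surjective. I would then invoke $|\mathcal{S}(I)| = |\mathcal{P}(I)| = n^m$: a surjection between two finite sets of equal cardinality is necessarily injective (pigeonhole). Hence $s(I)$ is a bijection, that is, a one-to-one correspondence.

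Next I would prove (B) $\Rightarrow$ (A), which requires no counting at all. If $s(I)$ is a one-to-one correspondence, then in particular it is a mapping, so Proposition~\ref{prop:StrObj} yields that preferences are strict on objects; and it is surjective, so every allocation is sequenceable, whence the converse part of Proposition~\ref{prop:poi} yields that the preferences are in the same order. Together these establish (A).

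The proof is essentially a matter of combining the two previous propositions, so there is no genuinely difficult step. The one point deserving care is the passage from surjectivity to injectivity in the forward direction: it does not follow from the two cited propositions by themselves and depends crucially on the equality $|\mathcal{S}(I)| = |\mathcal{P}(I)|$, without which a surjective mapping need not be one-to-one.
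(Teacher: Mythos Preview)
Your proof is correct and follows exactly the route the paper intends: the paper's own proof is the single sentence ``The proof is a consequence of Propositions~\ref{prop:StrObj} and~\ref{prop:poi},'' and you have simply spelled out how those two propositions combine, correctly observing that the equality $|\mathcal{S}(I)| = |\mathcal{P}(I)| = n^m$ (stated earlier in the paper) is what turns the surjective mapping into a bijection.
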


The proof is a consequence of
Propositions~\ref{prop:StrObj} and \ref{prop:poi}.




\hidden{
\subsubsection{0--1 preferences}
\fixme{Interesting?}
}

\paragraph{Single-peaked preferences}

An interesting domain restriction are single-peaked preferences
\citep{Black1948,ElkindLP16}, which, beyond voting, is also relevant
in resource allocation settings \citep{Bade2017,Damamme15}. Formally,
in this context, single-peakness can be defined as follows.

There exists a linear order $\triangleright$ over the set of objects
$\mathcal{O}$. Let $top(i)$ be the preferred object of $i$.  An agent
$i$ has \emph{single-peaked} preferences wrt. $\triangleright$ if, for
any two objects $(a,b) \in \mathcal{O}$ such that either
$top(i) \triangleright b \triangleright a$ or
$a \triangleright b \triangleright top(i)$ (i.e. lying on the same
``side'' of the agent's peak), it is the case that $i$ prefers $b$
over $a$.


\hidden{
\fixme{It could be useful to define single-peakness
  formally. Does it mean for instance that the preferences are strict
  on objects? Nico: In the basic version, yes. But there are variants / extensions for weak orders, see Lackner, in particular, and Fitzsimmons TARK-2015. Problem: there are several possible definitions then. Worth a look?}
  }

Interestingly, when preferences are single-peaked, the hierarchy of
$N$-cycle optimality collapses at the second level:

\begin{prop}
  \label{pro:single-peak-pref}
  If all the preferences are single-peaked (and additive), then an
  allocation $\vect{\pi}$ is $\geq$-$n$-cycle optimal iff it is swap-optimal.
\end{prop}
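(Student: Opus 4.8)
The easy direction is immediate: a swap is exactly a $(2,1)$-cycle deal and $2 \le n$, so the very definition of $\geq$-$n$-cycle ($=\geq$-$(n,1)$-cycle) optimality already forbids weakly improving swaps. So the whole content lies in the converse, and the plan is to prove it by contraposition: from an arbitrary weakly improving cycle deal I will manufacture a weakly improving swap. Setting things up, in a weakly improving $(K,1)$-cycle every agent must in fact transfer exactly one object (an agent who received nothing while giving an object would be strictly worse off, so empty transfers would propagate around the whole cycle and make it non-improving); taking the agents on the cycle to be distinct, the transferred objects are distinct as well. I thus work with distinct objects $z_1,\dots,z_K$ held by distinct agents $b_1,\dots,b_K$, where $b_j$ receives the object $z_{j-1}$ of its predecessor, and weak improvement reads $w(b_j,z_{j-1}) \ge w(b_j,z_j)$ for every $j$ (indices mod $K$), with one strict. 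Because preferences are additive and a single object changes hands per agent, every comparison reduces to single-object weights, and single-peakedness of $w(b_j,\cdot)$ along $\triangleright$ makes each upper-contour set $A_j = \{z : w(b_j,z) \ge w(b_j,z_j)\}$ an interval of $\triangleright$ containing the peak and $z_j$; since $z_{j-1}\in A_j$, the entire $\triangleright$-segment between $z_{j-1}$ and $z_j$ lies in $A_j$. If $K=2$ the cycle is already a swap, so I assume $K \ge 3$.

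For the main construction I call $b_j$ a \emph{left-mover} if $z_{j-1}\triangleleft z_j$ and a \emph{right-mover} otherwise. The holder of the $\triangleright$-leftmost cycle object can only receive something to its right and is therefore a right-mover; symmetrically the holder of the rightmost object is a left-mover. Reading the cycle objects in $\triangleright$-order, the left/right labels thus begin with ``right'' and end with ``left'', so there is a pair of $\triangleright$-consecutive cycle objects $z^- \triangleleft z^+$ whose holders are, respectively, a right-mover $b_b$ (holding $z^-$) and a left-mover $b_a$ (holding $z^+$). I then claim the swap exchanging $z^-$ and $z^+$ between $b_a$ and $b_b$ is weakly improving. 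Indeed $b_a$ receives some cycle object $z_{a-1}\triangleleft z^+$, which by consecutiveness satisfies $z_{a-1}\trianglelefteq z^-$, so $z^-$ lies in the $\triangleright$-segment $[z_{a-1},z^+]\subseteq A_{b_a}$, giving $w(b_a,z^-)\ge w(b_a,z^+)$; the symmetric argument gives $w(b_b,z^+)\ge w(b_b,z^-)$.

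The delicate point — and the step I expect to be the real obstacle — is to guarantee that at least one of these two inequalities is strict, so that the swap genuinely counts as weakly improving rather than being utility-neutral for both agents. I plan to rule out the neutral case directly. Suppose $w(b_a,z^-)=w(b_a,z^+)$. Single-peakedness (strict on each side of the peak) then forces $\mathrm{top}(b_a)$ to lie strictly between $z^-$ and $z^+$ in $\triangleright$; but $b_a$ also weakly prefers its received object $z_{a-1}\trianglelefteq z^-$, which sits on the same (left) side of the peak as $z^-$ and no closer to it, so strict single-peakedness on that side forces $z_{a-1}=z^-$, i.e.\ the predecessor of $b_a$ in the cycle is precisely $b_b$. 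The symmetric argument applied to $b_b$ forces $b_a$ to be the predecessor of $b_b$; hence $b_a$ and $b_b$ are mutually adjacent and the cycle has length $2$, contradicting $K\ge 3$. Consequently, for $K\ge 3$ the constructed swap is strictly improving for at least one of $b_a,b_b$ while being weakly improving for both, which is a weakly improving swap and contradicts swap-optimality.

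Putting the two directions together yields the equivalence. I expect the interval/unimodality bookkeeping (the upper-contour sets being $\triangleright$-intervals and the extraction of the adjacent right-mover/left-mover pair) to be routine, and the neutrality analysis above to carry the essential weight of the argument, since it is exactly where additivity alone would be insufficient and the single-peaked structure is used in full force.
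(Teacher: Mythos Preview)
Your argument is correct and follows a genuinely different route from the paper's. The paper proceeds by induction on the length $k$ of a hypothetical weakly improving cycle and obtains its contradiction via the Ballester--Haeringer characterisation of single-peaked profiles as \emph{worst-restricted}: the induction hypothesis (no shorter weakly improving cycle) is used to force, for every agent on the cycle, strict rankings of all non-adjacent cycle objects, and one then exhibits a triple of cycle objects each of which is ranked last by some agent, violating worst-restriction. You dispense with both the induction and the external characterisation: you use single-peakedness directly, through the interval shape of upper-contour sets along $\triangleright$, to locate a $\triangleright$-adjacent right-mover/left-mover pair and extract a swap in one step.

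What your approach buys is a constructive, self-contained argument that sidesteps the somewhat delicate bookkeeping in the paper's induction step (turning the absence of shorter improving cycles into strict rankings of all non-adjacent resources, while the original cycle is only \emph{weakly} improving). What the paper's approach buys is an explicit link to a well-known structural property of single-peaked domains, which makes it clearer how the result might transfer (or not) to related domain restrictions. Your treatment of the neutral case---showing that if both swap-inequalities are equalities then the two agents must be mutual cycle-predecessors, forcing $K=2$---is precisely where the \emph{strict} monotonicity on each side of the peak (as defined in the paper) is used in full, and you isolate this correctly.
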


\begin{proof}(Revisiting \cite{Damamme15})
First, note that $\geq$-$n$-cycle optimality trivially implies swap-optimality. Let us now show the conserve.

Let us consider for the sake of contradiction an allocation $\vect{\pi}$ that is swap-optimal and such that 
there exists a $\geq$-$k$-cycle $\mu$, with $k \leq n$. Without loss of generality, let us suppose that 
$\mu=\langle\left(1, \{r_1\}\right), \dots, \left(k, \{r_k\}\right) \rangle$. We show by induction on $k$, 
the length of $\mu$, that such a cycle can not exist. \\

\underline{Base case: $k=2$} A 1-cycle of length $k = 2$ is a swap-deal but as $\vect{\pi}$ is swap-optimal,
no improving swap-deal exists in $\vect{\pi}$ hence the contradiction.\\

\underline{Induction step:} Let us assume that for each $k'$ such that $2 \leq  k' \leq k - 1$, no 
$\geq$-$k'$-cycle exists in $\vect{\pi}$ and let us show that no cycle of length $k$ exists.

To exhibit a contradiction we will need to use the following necessary \citep{Ballester_Haeringer_2011}: 
to be single-peaked, a profile $U$ needs to be worst-restricted, i.e. for any triple of resources 
$R = (r_a, r_b, r_c) \in \mathcal{O}^3$ there always exists a resource $r_j \in R$ such that there exists
an agent $i$ with $r_j \notin \argmin_{k\in R}w(i, k)$ \citep{Sen_1996}.

Because $\mu$ is a $\geq$-$k$-cycle, for all agent $i \neq 1$ involved in $\mu$ we have $r_{i-1}\succ_i r_i$
and $r_k\succ_1 r_1$. As no $\geq$-$k'$-cycle exists, with $k' < k$, for all agents $i \neq 1$ involved in 
$\mu$ and for all resources $r$ in $\mu$, $r\neq r_i$ and $r\neq r_{i-1}$, we have $r_i\succ_i r$. Moreover for all 
resource $r$ in $\mu$, $r\neq r_1$ and $r\neq r_k$, we have $r_1 \succ_1 r$. If the preferences do not respect 
these conditions, a $\geq$-$k'$-cycleexists with $k' < k$.

Because the profile is worst-restricted, for all the triple of resources $R$ in $\{r_1, \dots, r_k\}$, at most 
two resources of $R$ can be ranked last among $R$ by the agents. Let us call $r_w$ one of these resources ranked 
last by agent $i_l$ and held by agent $i_w$. Thanks to the previous paragraph, we know that $\best(\mathcal{O}, i_w) = r_{w-1}$ 
and so, because her preferences are single-peaked, $i_w$ puts $r_{w+1}$ in last position among $r_{w-1}, r_w, r_{w+1}$. 
The same holds for agent $i_{w+1}$ who ranks $r_{w-1}$ in last position among $r_{w-1}, r_w, r_{w+1}$ (because $top(i_{w+1}) = r_w$). 
Therefore when we focus only on the three resources $r_{w-1}, r_w, r_{w+1}$, each of them is ranked last among them 
by one agent which violates the condition of worst-restriction. 
The contradiction is set, no $\geq$-$k$-cycle exists in $\overrightarrow{\pi}$.
\end{proof}

Together with Proposition~\ref{pro:cycle-sequence},
Proposition~\ref{pro:single-peak-pref} gives another interpretation
of sequenceability in this domain:

\begin{corollary}
  \label{coro:single-peak-seq}
  If all the preferences are single-peaked (and additive), then an allocation
  $\vect{\pi}$ is sequenceable if and only if it is swap-optimal.
\end{corollary}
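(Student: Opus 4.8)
The plan is to obtain this corollary purely by chaining the two preceding results, since Proposition~\ref{pro:cycle-sequence} characterizes sequenceability through cycle-optimality while Proposition~\ref{pro:single-peak-pref} collapses cycle-optimality down to swap-optimality in the single-peaked domain. Concretely, Proposition~\ref{pro:cycle-sequence} says that $\vect{\pi}$ is sequenceable if and only if it is $>$-$n$-cycle optimal, and Proposition~\ref{pro:single-peak-pref} says that, under single-peaked (additive) preferences, $\vect{\pi}$ is $\geq$-$n$-cycle optimal if and only if it is swap-optimal. The only non-mechanical ingredient is the bridge between the strict ($>$) and weak ($\geq$) flavours of cycle-optimality, so I would organize the proof as two implications and isolate that bridge.

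For the direction \emph{swap-optimal $\Rightarrow$ sequenceable}, the chain is clean. If $\vect{\pi}$ is swap-optimal then, by Proposition~\ref{pro:single-peak-pref}, it is $\geq$-$n$-cycle optimal. Forbidding every weakly improving $(K,1)$-cycle deal in particular forbids every strictly improving one, so $\geq$-$n$-cycle optimality entails $>$-$n$-cycle optimality; this is exactly the elementary implication recorded just before Proposition~\ref{pro:cycle-sequence} ($\geq$-cycle optimality implies $>$-cycle optimality). Proposition~\ref{pro:cycle-sequence} then delivers sequenceability.

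For the converse, \emph{sequenceable $\Rightarrow$ swap-optimal}, I would start again from Proposition~\ref{pro:cycle-sequence}: sequenceability yields $>$-$n$-cycle optimality, hence in particular the absence of any \emph{strictly} improving swap-deal. The main obstacle I expect is precisely the gap between this $>$-flavoured conclusion and the $\geq$-flavoured swap-optimality of Proposition~\ref{pro:single-peak-pref}: a sequenceable allocation could in principle still admit a weakly improving swap (one agent strictly better off, the other indifferent between the two exchanged objects), and such a swap creates no frustrating two-object sub-allocation, so it is invisible to the sequenceability criterion of Proposition~\ref{prop:sousAlloc}. The way I would close this is to keep the flavour of ``swap-optimal'' fixed consistently on both sides: reading it in the weak sense throughout, and invoking the fact (stated before Proposition~\ref{pro:cycle-sequence}) that under preferences strict on shares the $>$- and $\geq$-notions coincide, which the strict side-preferences built into the single-peaked definition make natural to assume. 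Under that convention the residual gap vanishes and the two chains fit together into the stated equivalence; absent it, the corollary should be read with ``swap-optimal'' in the weak sense matching Proposition~\ref{pro:single-peak-pref}.
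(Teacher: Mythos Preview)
Your approach is exactly the paper's: the corollary is offered there with no proof beyond the sentence ``Together with Proposition~\ref{pro:cycle-sequence}, Proposition~\ref{pro:single-peak-pref} gives another interpretation of sequenceability in this domain,'' and you have spelled out precisely that chaining.

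You are in fact more scrupulous than the paper in flagging the $>$/$\geq$ mismatch between the two propositions. That gap is real and the paper does not address it. Your proposed fix via ``strict on shares'' is heavier than needed, though, and single-peakedness does not deliver it. The lighter observation that closes the gap is this: a swap-deal exchanges a single object for a single object, so each participating agent is comparing two individual objects; if preferences are strict on \emph{objects}, neither agent can be indifferent, and every weakly improving swap is strictly improving. Hence $\geq$-swap-optimal and $>$-swap-optimal coincide, and the chain sequenceable $\Rightarrow$ $>$-$n$-cycle optimal $\Rightarrow$ $>$-swap-optimal $=$ $\geq$-swap-optimal goes through. One caveat remains: the paper's single-peaked definition constrains preferences only on each side of the peak, so an agent could in principle be indifferent between a left-of-peak and a right-of-peak object; strictness on objects is therefore an implicit extra assumption that the paper (per its own hidden remark) takes for granted in this section.
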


Proposition~1 by \cite{Damamme15} is much stronger than our
Corollary~\ref{coro:single-peak-seq}, because it shows that
swap-optimality is actually equivalent to Pareto-efficiency \emph{when
  each agent receives a single resource}.  Unfortunately, in our
context where each agent can receive several items, this is no longer
the case, as the following example shows:

\begin{example} \label{exSP}  
  Consider this instance, single-peaked with respect to
  $1 \triangleright \dots \triangleright 6$:

 \[  \left(
     \begin{array}{cccccc}
       \dag \al 1 & \al 2 & 3 & 4 & 5 & \dag 6\\
       1 & \dag 3 & \al 4 & \al 5 &  \dag 6 & 2\\
       1 & 2 & \dag 4 & \dag 5 & \al 6 & \al 3
     \end{array}
   \right) \]
 
 The circled allocation is swap-optimal, but Pareto-dominated by the
 allocation marked with dags.
\end{example}


\section{Envy-Freeness and CEEI}
\label{sec:CEEI}

\hidden{In the previous section we have investigated the link existing between
efficiency and sequenceability. }The use of sequence of sincere choices
can also be motivated by the search for a \emph{fair} allocation
protocol. 
\hidden{In this section, w} We will focus on two fairness properties,
envy-freeness and competitive equilibrium from equal income and
analyze their link with sequenceability.  Envy-freeness
\citep{Tinbergen53,Foley67,Varian1974} is probably one of the most
prominent fairness properties:

\begin{definition} \label{def:EF} Let $I$ be an add-MARA instance and
  ${\vect{\pi}}$ be an allocation.  ${\vect{\pi}}$ verifies the
  \emph{envy-freeness property} (or is simply \emph{envy-free}), when
  $u_i(\pi_i) \geq u_i(\pi_j)$,
  $\forall (i, j) \in \mathcal{N}^2$ (no agent strictly prefers the
  share of any other agent).
\end{definition}

The notion of \emph{competitive equilibrium} is an old and well-known
concept in economics
\citep{Walras1874Elements,Fisher1891MathematicalInvestigations}. If
equal incomes are imposed among the stakeholders, this concept becomes
the \emph{competitive equilibrium from equal incomes} \citep{Moulin03},
yielding a very strong fairness concept which has been recently
explored in artificial intelligence
\citep{Othman2010,Budish11,Bouveret2015CharacterizingConflicts}.

\begin{definition}\label{def:CEEI}
  Let $I = (\mathcal{N}, \mathcal{O}, w)$ be an add-MARA instance, ${\vect{\pi}}$
  an allocation, and $\vect{p} \in [0, 1]^m$ a vector of
  prices.  A pair $({\vect{\pi}}, \vect{p})$ is said to form a
  \emph{competitive equilibrium from equal incomes (CEEI)} if
  \[
    \forall i \in \mathcal{N} : \pi_i \in \argmax_{\pi \subseteq \mathcal{O}} \left\{
      u_i(\pi) :
      \sum_{k \in \pi} p_k \leq 1 \right\}.
  \]
  In other words, $\pi_i$ is one of the maximal shares that
  $i$ can buy with a budget of 1, given that the price of
  each object $k$ is $p_k$.

  We will say that allocation ${\vect{\pi}}$ satisfies the CEEI test (is a
  CEEI allocation for short) if there exists a vector $\vect{p}$ such
  that $({\vect{\pi}}, \vect{p})$ forms a CEEI.
\end{definition}

As \cite{Bouveret2015CharacterizingConflicts} and
\cite{Branzei2015ComputationalFair} have shown, every CEEI allocation
is envy-free in the model we
use.  
In this section, we investigate the question of whether an envy-free
or CEEI allocation is necessarily sequenceable. For envy-freeness, the
answer is negative.

\begin{prop}
  There exists non-sequenceable envy-free allocations, even if the
  agents' preferences are strict on shares.
\end{prop}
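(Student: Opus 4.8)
The statement is an existence claim, so the plan is to exhibit a single explicit instance together with an allocation and verify the three required properties by hand. I would work with two agents and four objects, and designate the allocation $\vect{\pi} = \langle 14, 23 \rangle$ (agent $1$ receives objects $1$ and $4$, agent $2$ receives objects $2$ and $3$). The design goal is to arrange the weights so that, simultaneously, (i) some sub-allocation is frustrating --- which by Proposition~\ref{prop:sousAlloc} forces $\vect{\pi}$ to be non-sequenceable; (ii) each agent strictly prefers her own bundle to the other's, giving envy-freeness; and (iii) all subset utilities are pairwise distinct for each agent, giving strictness on shares.

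For the frustrating sub-allocation I would target the pair $\{3, 4\}$: I require $w(1,3) > w(1,4)$ and $w(2,4) > w(2,3)$, so that within $\{3,4\}$ agent $1$'s top object is $3$ and agent $2$'s top object is $4$, yet in $\vect{\pi}$ agent $1$ holds $4$ and agent $2$ holds $3$. Neither agent then receives a top object of $\{3,4\}$, so $\vect{\pi}^{|\{3,4\}}$ is frustrating and $\vect{\pi}$ is non-sequenceable.

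The subtle point --- and the only real obstacle --- is to reconcile this object-level ``crossing'' with bundle-level envy-freeness, since envy-freeness compares whole shares rather than single objects. This is achieved by letting the first two objects carry the dominant weight in opposite directions: I require $w(1,1) - w(1,2) > w(1,3) - w(1,4) > 0$, so that agent $1$'s advantage on object $1$ over object $2$ outweighs her reversed preference within $\{3,4\}$, together with the symmetric inequality $w(2,2) - w(2,1) > w(2,4) - w(2,3) > 0$ for agent $2$. These gap inequalities guarantee $u_1(\{1,4\}) > u_1(\{2,3\})$ and $u_2(\{2,3\}) > u_2(\{1,4\})$, i.e.\ strict envy-freeness.

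A concrete witness satisfying all constraints is the instance
\[ \left( \begin{array}{cccc} 10 & 2 & 4 & 1 \\ 2 & 10 & 1 & 4 \end{array} \right) \]
with $\vect{\pi} = \langle 14, 23 \rangle$. Here agent $1$ values her share at $11$ against $6$ for $\{2,3\}$, and symmetrically for agent $2$, so $\vect{\pi}$ is envy-free; the pair $\{3,4\}$ is frustrating as above, so $\vect{\pi}$ is non-sequenceable; and it remains only to check that each row has pairwise-distinct subset sums (a routine finite verification, since the weights $10,2,4,1$ and their permutation have all $2^4$ subset sums distinct), which yields strictness on shares.
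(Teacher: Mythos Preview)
Your proof is correct and follows the same strategy as the paper: exhibit a single concrete instance and verify envy-freeness, non-sequenceability (via a frustrating sub-allocation and Proposition~\ref{prop:sousAlloc}), and strictness on shares by direct inspection. The paper simply points back to the three-agent, five-object Example~\ref{ex:EFnonPOprefsStrictes}, whereas you build a smaller two-agent, four-object witness and explain explicitly how the ``crossing'' on $\{3,4\}$ is compensated by the dominant weights on objects $1$ and $2$; this is a cleaner and more self-contained instance, but the underlying idea is identical.
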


\begin{proof} A counterexample with strict preferences on shares is
  given in Example~\ref{ex:EFnonPOprefsStrictes} above, for which we
  can check that the circled allocation ${\vect{\pi}}$ is envy-free
  and non-sequenceable.
\end{proof}

However, for CEEI, the answer is positive:

\begin{prop} \label{prop:CEEI} Every CEEI allocation is
  sequenceable.
\end{prop}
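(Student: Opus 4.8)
The plan is to prove the contrapositive: every non-sequenceable allocation fails the CEEI test. So let $\vect{\pi}$ be non-sequenceable. By Proposition~\ref{prop:sousAlloc} it contains a frustrating sub-allocation $\vect{\rho}$, and I would replay verbatim the cycle-construction from the proof of Proposition~\ref{prop:POseq}. This yields a cycle of pairwise-distinct agents and objects $a_i \stackrel{o_k}{\longrightarrow} a_k \stackrel{o_{k-1}}{\longrightarrow} a_{k-1} \cdots a_{i+1} \stackrel{o_i}{\longrightarrow} a_i$ in which, in $\vect{\pi}$, agent $a_i$ holds $o_k$ but strictly prefers $o_i$, agent $a_{i+1}$ holds $o_i$ but strictly prefers $o_{i+1}$, and so on up to $a_k$, who holds $o_{k-1}$ but strictly prefers $o_k$. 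In short, every agent $a_j$ in the cycle holds some object $h_j \in \pi_{a_j}$ while strictly preferring the distinct object $o_j$ that a neighbour of the cycle owns (so $o_j \notin \pi_{a_j}$).

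The crux is then to turn this trading cycle into a statement about prices. Suppose towards a contradiction that $(\vect{\pi}, \vect{p})$ forms a CEEI for some price vector $\vect{p}$. Fix an agent $a_j$ of the cycle and consider the bundle $\pi_{a_j} \setminus \{h_j\} \cup \{o_j\}$, obtained by exchanging her held object $h_j$ for the strictly-preferred $o_j$. Were $p_{o_j} \leq p_{h_j}$, this bundle would cost no more than $\pi_{a_j}$, hence stay within the budget of $1$, while giving $a_j$ strictly higher utility (since $w(a_j, o_j) > w(a_j, h_j)$); this would contradict the defining maximality of $\pi_{a_j}$ in Definition~\ref{def:CEEI}. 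Therefore $p_{o_j} > p_{h_j}$ for every agent $a_j$ of the cycle.

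Reading these strict inequalities around the cycle now closes the argument: the inequality from $a_i$ gives $p_{o_i} > p_{o_k}$, while the inequalities from $a_{i+1}, \dots, a_k$ give $p_{o_{i+1}} > p_{o_i}$, \dots, $p_{o_k} > p_{o_{k-1}}$, so that $p_{o_i} < p_{o_{i+1}} < \cdots < p_{o_k} < p_{o_i}$, i.e.\ $p_{o_i} < p_{o_i}$, a contradiction. Hence no price vector can make $\vect{\pi}$ a CEEI, and by contraposition every CEEI allocation is sequenceable.

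I expect the one genuinely new step to be the observation in the second paragraph: that budget-optimality under equal incomes forces the object prices to increase strictly along the trading cycle (otherwise a cheap-for-expensive swap would be an affordable improvement). Everything else — the extraction of the cycle itself, and its cyclic wrap-around structure — is inherited wholesale from Proposition~\ref{prop:POseq}, and once the price monotonicity is established the cycle immediately yields the impossible chain $p_{o_i} < \cdots < p_{o_i}$.
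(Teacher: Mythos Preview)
Your proof is correct and, like the paper's, proceeds by contraposition and reuses the trading cycle from Proposition~\ref{prop:POseq}. The difference is purely in how you extract the contradiction from the prices. The paper works at the bundle level: for each agent $a_j$ in the cycle $\mathcal{C}$, feasibility gives $p(\pi_{a_j}) \leq 1$, while optimality applied to the improved share $\pi'_{a_j}$ gives $p(\pi'_{a_j}) > 1$; summing over $\mathcal{C}$ yields $p\bigl(\bigcup_{j\in\mathcal{C}} \pi_j\bigr) \leq |\mathcal{C}| < p\bigl(\bigcup_{j\in\mathcal{C}} \pi'_j\bigr)$, impossible because the two unions coincide. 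You instead work at the object level: the single-object swap $\pi_{a_j} \setminus \{h_j\} \cup \{o_j\}$ is strictly better, so it must be unaffordable, forcing $p_{o_j} > p_{h_j}$; since the held objects $h_j$ and the desired objects $o_j$ are the same set of objects cyclically shifted, chaining these strict inequalities around the cycle gives $p_{o_i} < p_{o_i}$. Both arguments exploit exactly the same invariance (the cycle permutes the objects among the agents in $\mathcal{C}$), and your version is arguably a touch more transparent because it pinpoints the individual price comparisons rather than aggregating.
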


It was already known that every CEEI allocation is Pareto-optimal if
the preferences are strict on shares
\citep{Bouveret2015CharacterizingConflicts}. From
\hidden{Observation~\ref{rem:str} and} Proposition~\ref{prop:POseq},
if the preferences are strict on shares, then every CEEI allocation is
sequenceable. But Proposition~\ref{prop:CEEI} is more general: no
assumption is made on the strict order on shares (nor on objects).

Note that a CEEI allocation can be ordinally necessary
Pareto-dominated, as the following example shows.
\[
\left(
  \begin{array}{cccc}
    \dag\al 2 & \dag 3 & 3 & \al 2\\
    2 & 3 & \dag\al 4 & 1 \\
    0 & \al 4 & 2 & \dag 4
  \end{array}
\right)
\]
The circled allocation is CEEI (with prices 0.5, 1, 1, 0.5) but is
ordinally necessary (hence also additively) dominated by the
allocation marked with $\dag$.

\hidden{
\begin{proof}(Sketch) Suppose ${\vect{\pi}}$ is non-sequenceable but CEEI. There must exist an allocation ${\vect{\pi}'}$ reachable with an improving cycle involving agents in $\cal C$.  
By summing the equations~of price constraint and optimality for CEEI, provided that all
shares are disjoint, we reach a contradiction:

  \begin{eqnarray*} \label{eq:C}
    p(\bigcup_{j \in \cal C} \pi_j) \leq |\cal C|
    & \text{and} &
                  p(\bigcup_{j \in \cal C} \pi'_j) > |\cal C|
  \end{eqnarray*}  
\end{proof}
}

\begin{proof}  We will show that no allocation can be at the
  same time non-sequenceable and CEEI. Let ${\vect{\pi}}$ be a non-sequenceable
  allocation.  We can use the same terms and notations than in the
  proof of Proposition~\ref{prop:POseq}, especially concerning the
  dominance cycle.
  
  Let $\mathcal{C}$ be the set of agents concerned by the
  cycle. ${\vect{\pi}}$ contains the following shares:
  \[    
    \pi_{a_i} = \{ o_k \} \cup \tau_i \ \ \ \
    \pi_{a_{i+1}} = \{ o_i \} \cup \tau_{i+1} \ \ \  \ .... \ \ \ \
    \pi_{a_k} = \{ o_{k-1} \} \cup \tau_k 
  \]
  whereas the allocation ${\vect{\pi}}'$ that dominates it, contains the
  following shares:
  \[
    \pi'_{a_i} = \{ o_i \} \cup \tau_i \ \ \ \
    \pi'_{a_{i+1}} = \{ o_{i+1} \} \cup \tau_{i+1} \ \ \ \
    .... \ \ \ \
    \pi'_{a_k} = \{ o_{k} \} \cup \tau_k
  \]
  the other shares being unchanged from ${\vect{\pi}}$ to ${\vect{\pi}}'$.
  
  Suppose that ${\vect{\pi}}$ is CEEI. This allocation must satisfy two
  kinds of constraints. First, ${\vect{\pi}}$ must satisfy the price
  constraint. If we write $p(\pi) \Def \sum_{k\in\pi} p_k$, we have,
  $\forall i\in{\cal C}$, $p(\pi_i) \leq 1$ (1).
  
  Next, ${\vect{\pi}}$ must be optimal: every share having a higher utility
  for an agent than her share in ${\vect{\pi}}$ costs strictly more than
  1. Provided that
  $\forall i\in{\cal C} : u_i(\pi'_i) >
  u_i(\pi_i)$
  (because ${\vect{\pi}}'$ substitutes more preferred objects to less
  preferred objects in ${\vect{\pi}}$), this constraint can be written as
  $\forall i\in{\cal C}$, $p(\pi'_i) > 1$ (2).
  
  By summing equations~(1) and (2), provided that all
  shares are disjoint, we obtain
  \begin{eqnarray*} \label{eq:C}
    p(\bigcup_{j \in \cal C} \pi_j) \leq |\cal C|
    & \text{and} &
                  p(\bigcup_{j \in \cal C} \pi'_j) > |\cal C|
  \end{eqnarray*}
  Yet,
  $\bigcup_{j \in \cal C} \pi_j=\bigcup_{i \in \cal
    C}\pi'_j$
  (because the allocation ${\vect{\pi}}'$ is obtained from ${\vect{\pi}}$ by
  simply swapping objects between agents in $\mathcal{C}$). The two
  previous equations are contradictory.
\end{proof}



\section{Experiments}
\label{sec:expe}

We have exhibited in Section~\ref{sec:Pareto} a ``hierarchy of
allocation efficiency'' made of several steps: Pareto-optimal (PO),
sequenceable (Seq), \{cycle-deal-optimal\}, non-sequenceable (--). A
natural question is to know, for a given instance, which proportion of
allocations are located at each level of the scale. We give a first
answer \hidden{in this section} by experimentally studying the
distribution of allocations between the different levels. For
cycle-deal optimality, we focus on the simplest type of deals, namely,
$>$-swap-deals. We thus have a linear scale of efficiency concepts,
from the strongest to the weakest: PO $\to$ Seq $\to$ Swap $\to$
--. We also analyze the relation between efficiency and various
notions of fairness by linking this latter scale with the 6-level
scale of fairness introduced by
\cite{Bouveret2015CharacterizingConflicts}: CEEI $\to$ Envy-Freeness
(EF) $\to$ min-max share (mFS) $\to$ proportionality (PFS) $\to$
max-min share (MFS) $\to$ --.  \hidden{Our experimental protocol is
  the following. } We generate 50  add-MARA
instances involving 3 agents-8 objects, using two different models. For both models, a set of
weights are uniformly drawn in the interval $\llb 0, 100 \rrb$ and the
instances are then normalized. For the second model, these weights are
reordered afterwards to make the preferences single-peaked. For each
instance, we generate all 6561 allocations, and identify for each of
them the \emph{highest} level of fairness and efficiency
satisfied. The average number of allocations with min-max interval is
plotted as a box for each level on a logarithmic scale in
Figure~\ref{fig:graphe01}. The figure also shows for each fairness
criterion the proportion of allocations that satisfy each efficiency
criterion, on a linear scale.

\begin{figure*}[htbp]
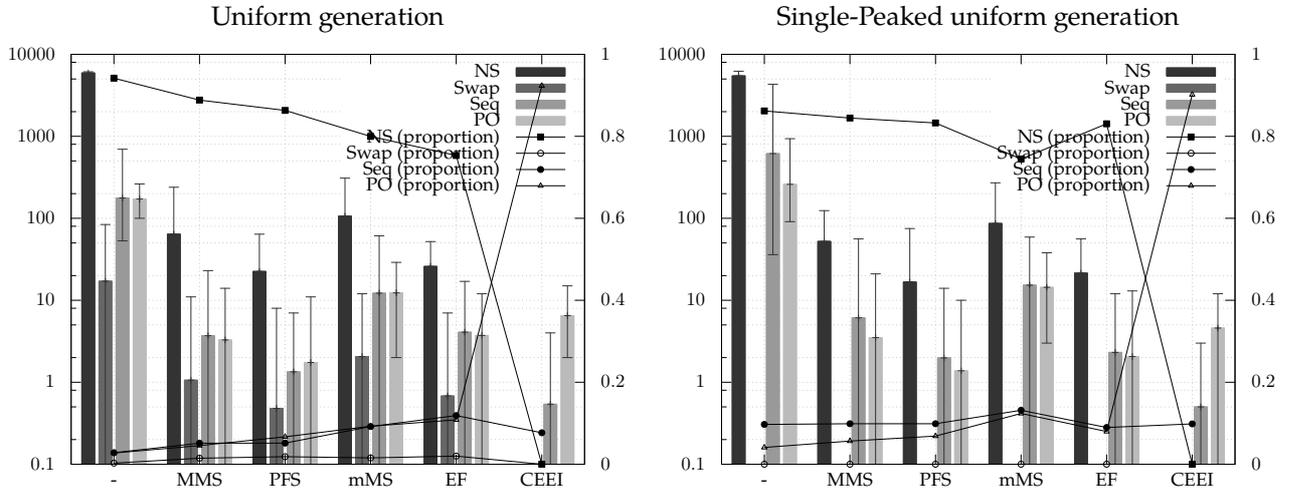

  {\parbox{0.48\textwidth}{{\makebox[1cm]{}\hfill Uniform generation \hfill\hbox{}}\\
      \scalebox{0.7}{
        \input{graphes/graphe01}
      }
  }} \hfill
  \parbox{0.48\textwidth}{{\makebox[1cm]{}\hfill Single-Peaked uniform generation \hfill\hbox{}}\\
    \scalebox{0.7}{
      \input{graphes/graphe02}
    }
  }
  \caption{Distribution of the number of allocations by pair of (efficiency, fairness) criteria.}
  \label{fig:graphe01}
\end{figure*}

Note that some fairness and efficiency tests require to solve \np-hard
or \conp-hard problems (MMS, mMS, and PO tests). These tests are
delegated to an external ILP solver. This is especially interesting
for the CEEI test which is known to be \np-hard
\citep{Branzei2015ComputationalFair}, and for which, to the best of
our knowledge, no practical method had been described before.  The
implementation is available as a fully documented and tested Free
Python library.\footnote{Available at:
  \url{https://gricad-gitlab.univ-grenoble-alpes.fr/bouveres/fairdiv}.}

We note several interesting facts. First, a majority of allocations do
not have any efficiency nor fairness property (first black bar on the
left). Second, the distribution of allocations on the efficiency scale
seems to be related to the fairness criteria: a higher proportion of
swap-optimal or sequenceable allocations are found among envy-free
allocations than among allocations that do not satisfy any fairness
property, and for CEEI allocations, there are even more Pareto-optimal
allocations than just sequenceable ones. Lastly, the absence of
vertical bar for swap-optimality in the experiments concerning
single-peaked preferences confirms the results of
Corollary~\ref{coro:single-peak-seq}: in this context, no allocation
can be swap-optimal but not Sequenceable; hence, all the allocations
that are swap-optimal are contained in the bars concerning
sequenceable or Pareto-optimal allocations. Similarly, the absence of
bars for swap-optimality and -- (non-sequenceable) in both graphs confirms the result of
Proposition~\ref{prop:CEEI}.



\section{Conclusion}
\label{sec:conclusion}

In this paper, we have shown that picking sequences and cycle-deals
can be reinterpreted to form a rich hierarchy of efficiency
concepts. Many interesting questions remain open, such as the
complexity of computing cycle-deals, the precise relation between
Pareto-efficiency and $\geq$-cycle-optimality or the link between
efficiency concepts and social welfare. One could also think of
further extending the efficiency hierarchy by studying restrictions on
possible sequences (\textit{e.g.} alternating) or extending the types
of deals to non-cyclic ones.

\bibliography{partage}


\end{document}